\documentclass[sigconf, nonacm]{acmart}

\usepackage{algorithm}
\usepackage{algpseudocode}
\usepackage{amsmath,amsfonts}
\usepackage{hyperref}
\usepackage{mathtools}
\usepackage{pdflscape}
\usepackage{afterpage}

\makeatletter
\newcommand{\LineLabel}[1]{%
  \hypertarget{#1}{\phantomsection}\label{#1}%
}
\makeatother
\usepackage{amsthm}
\newtheorem{theorem}{Theorem}[section]

\newtheorem{definition}{Definition}[section]

\usepackage{caption}
\usepackage{subcaption}

\usepackage[nameinlink,capitalize]{cleveref}

\usepackage{tikz}
\usepackage{xcolor}
\usetikzlibrary{fit,calc}

\newcommand{\tikzmk}[1]{\tikz[remember picture,overlay,baseline=(#1.base)]\node (#1) {\strut};}

\newcommand{\boxit}[1]{%
    \begin{tikzpicture}[remember picture,overlay]

        \coordinate (topleft) at ($(A.base)+(0, 2.1ex)$); 
    
        \coordinate (bottomright) at ($(B.base)+(0, -0.7ex)$);
        
        \node [fill=#1, fill opacity=0.2, inner sep=0pt, rectangle, 
               fit=(topleft) (bottomright)] {};
    \end{tikzpicture}%
}
\newcommand{\strikeline}[1]{%
    \begin{tikzpicture}[remember picture,overlay]
        \draw[red, thick, opacity=0.6] 
            ($(A.west)+(0, 0.2ex)$) -- ($(B.east)+(0, 0.4ex)$);
    \end{tikzpicture}%
}

\usepackage[table]{xcolor} %
\usepackage{booktabs}      %
\usepackage{multirow}      %

\newcommand{\ourPE}{PE-means}
\newcommand{\augpe}{Aug-PE}

\newcommand{\bra}[1]{\left( #1 \right)}
\newcommand{\brb}[1]{\left[ #1 \right]}

\newcommand{\brc}[1]{\left\{ #1 \right\}}

\newcommand{\dpvotingname}{DP Nearest Neighbors Histogram}
\newcommand{\dpvotingfunctionname}{\textsf{DP\_NN\_HISTOGRAM}}
\newcommand{\dpvotingfunction}[1]{\dpvotingfunctionname{}\bra{#1}}

\newcommand{\distancefunctionname}{d}
\newcommand{\distancefunction}[1]{\distancefunctionname{}\bra{#1}}

\newcommand{\numiterations}{T}
\newcommand{\numgensamples}{N_\syn}

\newcommand{\noisemultiplier}{\sigma}

\newcommand{\normaldistribution}[2]{\calN\bra{#1,#2}}

\newcommand{\priv}{\mathrm{priv}}
\newcommand{\syn}{\mathrm{syn}}

\newcommand{\calN}{\mathcal{N}}

\newcommand{\Dataset}{\ensuremath{D}}
\newcommand{\Ddim}{\ensuremath{d}}
\newcommand{\Rdim}{\ensuremath{p}}
\newcommand{\DatasetSize}{\ensuremath{N}}
\newcommand{\radius}{\ensuremath{R}}
\newcommand{\mutationScale}{\ensuremath{\gamma}}
\newcommand{\numVariations}{\ensuremath{L}}

\newcommand{\google}{Google-LSH}
\newcommand{\fastlloyd}{FastLloyd}
\newcommand{\diffpriv}{DP-Lib}
\newcommand{\balcan}{Icml17}

\begin{document}
\title{PE-means: Improved Differentially Private $k$-means Clustering through Private Evolution}

\author{Thomas Humphries}
\affiliation{%
  \institution{University of Waterloo}
}
\email{thomas.humphries@uwaterloo.ca}

\author{Zinan Lin}
\affiliation{%
  \institution{Microsoft Research}
}
\email{zinanlin@microsoft.com}

\author{Sergey Yekhanin}
\affiliation{%
  \institution{Microsoft Research}
}
\email{yekhanin@microsoft.com}

\begin{abstract}
    We study the problem of differentially private (DP) $k$-means clustering in Euclidean space. 
    Previous solutions rely on summing the private data directly, which induces a sensitivity proportional to the domain.
    We introduce PE-means, an extension of the private evolution (PE) algorithm (an increasingly popular method for synthetic data generation), to the problem of $k$-means clustering.
    The key advantage of PE is that it only computes a private histogram with constant sensitivity to guide the evolution.
    Our adaptation of PE includes new evolutionary operators for clustering, as well as other algorithmic improvements of independent interest. 
    Overall, PE-means achieves an average improvement of 26\% in clustering loss over state-of-the-art baselines such as Google's LSH-based algorithm and DP-Lloyd variants.
\end{abstract}

\maketitle

\section{Introduction}
The $k$-means clustering algorithm is a foundational tool in data science with many influential applications such as recommendation systems and healthcare data analysis~\cite{mcsherry2009,ghosal2020short}.
The objective is to group the data into $k$ groups with similar features, allowing practitioners to interpret complex datasets.
The challenge is that applying standard $k$-means algorithms to sensitive data risks the privacy of participants in the dataset.
In the worst case, the $k$-means algorithm can publish the exact data record of an individual who is an outlier in the set.
To address this, there has been a large body of work that considers the problem of differentially private $k$-means~\cite{su_clustering, diaa2025fastlloyd, google_clustering, blum2005, icml17_clustering, stemmer2018, mcsherry2009, Feldman2009, Ghazi2020}.
Differential privacy (DP) is a popular privacy definition that ensures a change in a single user's input does not have a significant effect on the output of the mechanism~\cite{dwork-dp}.
By adding calibrated randomization to the clustering process, existing approaches retain the benefits of $k$-means clustering while providing a formal privacy guarantee.

The challenge with DP is balancing the inherent trade-off between privacy and utility.
Existing state-of-the-art approaches for DP $k$-means rely on summing the private data directly at some point in the process~\cite{google_clustering, su_clustering, icml17_clustering}.
The sensitivity of (or largest impact any one user can have on) a sum query is determined by the addition (or removal) of a point on the boundary of the domain.
This means that to satisfy DP, noise must be added proportionally to the entire domain, significantly degrading the quality of the clustering.
In recent work, an approach called FastLloyd managed to reduce this sensitivity from the whole domain to a smaller radius around each cluster through relative cluster updates~\cite{diaa2025fastlloyd}.
However, the radius is still proportional to the data dimension, and FastLloyd introduces an additional error term due to the relative updates.

Private evolution (PE) is an increasingly popular technique for generating various modalities of synthetic data~\cite{pe, augpe, pe_three,tran2026differentially}.
PE evolves a population of synthetic data using only inference API access to foundation models that generate and perturb the data in an evolutionary algorithm.
A key component in PE's success is that it only uses the private data in a nearest neighbour voting scheme to choose the best samples generated so far.
Specifically, each private data point only contributes a single vote to the algorithm in each iteration.
Thus, adding or removing any private data point affects the output by a constant sensitivity of $1$, regardless of the data domain.
This gives PE a significant advantage over gradient-based techniques, which must add noise proportionally to the gradient domain.
What makes PE truly impressive is that, despite getting much less signal from the private data, it still effectively traverses large and complex data domains by leveraging the robust optimization characteristics of evolution.

In this work, we introduce \ourPE, an extension of the PE framework to the problem of $k$-means clustering. 
In this application, the population of PE contains randomly generated centroids that are iteratively evolved towards the centroids of the private data.
Our design replaces PE's API calls to foundation models with a lightweight clustering initialization and a Lévy flight-based mutation operator.
In addition to adapting PE to clustering, we also make several improvements to the algorithm that are of independent interest.
Specifically, we address a critical issue with PE's selection technique where votes can be split between the best candidates, causing no representative from certain areas to be selected.
We also reduce the impact of DP noise through better post-processing of the vote histogram and a mechanism to adaptively adjust the signal-to-noise ratio at no cost to privacy.
Our experimental evaluation shows \ourPE~outperforms state-of-the-art baselines over numerous real and synthetic datasets.
Furthermore, we propose HD\ourPE, a variant of \ourPE~that uses a similar dimensionality reduction to Balcan et al.'s~\cite{icml17_clustering}, allowing \ourPE~to scale to higher dimensions.
Overall, we observe up to 91\% improvement in clustering loss over the state-of-the-art, with an average improvement of $26\%$ across all the datasets evaluated.

The rest of the paper is organized as follows. We first give the necessary background in Section~\ref{sec:background}. Section~\ref{sec:method} details our design through a step-by-step treatment of the PE algorithm. In Section~\ref{sec:experiments}, we provide all experimental evidence including a case study on vote-splitting, the tuning of hyperparameters, and a utility evaluation of \ourPE~and HD\ourPE. Finally, Section~\ref{sec:related_work} surveys the relevant literature before concluding in Section~\ref{sec:conclusion}.

\section{Background}\label{sec:background}
\subsection{$k$-Means Clustering}\label{sec:kmeans}
Given a dataset $\Dataset$ of size $\DatasetSize$ and dimension $\Ddim$, the $k$-means problem aims to partition the dataset into $k$ groups (clusters) $C=\{C_1, \dots, C_k\}$, with each $C_i\subset\Dataset$, such that the following objective is minimized,
\begin{equation}\label{eqn:k_means_objective}
\arg\min_{C} \sum_{i=1}^{k} \sum_{x \in C_i} \|x - \mu_i\|^2
\end{equation}
where $\mu_i$ is the centroid of cluster $i$ ($\mu_i = \frac{1}{|C_i|} \sum_{x \in C_i} x$).
In this work, we assume the size and dimension of the dataset are public information, but we require any other information published from the dataset to satisfy differential privacy.

\subsection{Differential Privacy}
Differential privacy (DP)~\cite{dwork-dp} adds randomness to the computation of aggregate statistics to protect the privacy of individuals.
DP guarantees that an algorithm's output is approximately the same, regardless of the participation of any one user.
More formally, differential privacy can be defined as follows.
\begin{definition}[Differential Privacy]\label{def.adp}
    A randomized algorithm $M:\mathcal{D} \mapsto \mathbb{R}$ is $(\epsilon,\delta)$-DP, if for any pair of neighbouring datasets $\Dataset,\Dataset' \in \mathcal{D}$, and for any $S \subseteq \mathbb{R}$ we have
    \begin{equation}
     \Pr[M(\Dataset)\in S] \leq e^{\epsilon} \Pr[M(\Dataset')\in S] +\delta .
    \end{equation}
\end{definition}
The privacy parameter $\epsilon$ defines how similar the output distributions must be, and $\delta$ allows a small chance of failure in the definition.
We use the unbounded neighbouring definition, where datasets are neighbours if $|\Dataset\backslash \Dataset'\cup \Dataset'\backslash \Dataset| = 1$ (we allow for the addition or removal of a single data point). 
Arbitrary computations can be carried out on the output of a DP mechanism without affecting privacy due to the post-processing lemma~\cite{dwork14}.
Finally, DP is composed naturally with multiple runs of a mechanism.
If we apply differentially private mechanisms sequentially, the privacy parameter composes through summation or more advanced methods~\cite{dwork14}.

\begin{definition}[Sensitivity]\label{def:sensitivity}
    Let $f:\mathcal{D}\mapsto \mathbb{R}^k$. If $\ \mathbb{D}$ is a distance metric between elements of $\ \mathbb{R}^k$, then the $\mathbb{D}$-sensitivity of $f$ is
    \begin{equation}
        \Delta^{(f)}=\max_{(D,D')} \mathbb{D}(f(\Dataset), f(\Dataset')),
    \end{equation}
    where $(\Dataset,\Dataset')$ are pairs of neighbouring datasets.
\end{definition}
We primarily use the $\ell_2$ norm as the distance metric $\mathbb{D}$. To satisfy DP, we use Gaussian noise in this work and analyze it with Gaussian Differential Privacy (GDP)~\cite{dong22_gdp}.

\begin{definition}[GDP~\cite{dong22_gdp}] \label{def:GDP}
    A mechanism $M$ is said to satisfy $\theta$-Gaussian Differential Privacy ($\theta$-GDP) if
    \[
    \mathcal{T}\big(M(D),M(D')\big) \ge G_\theta
    \]
    for all neighbouring datasets $D$ and $D'$, where $\mathcal{T}$ is a trade-off function measuring the difficulty for attackers in identifying the presence of an individual data point and $G_\theta=\mathcal{T}\big(\mathcal{N}(0,1),\mathcal{N}(\theta,1)\big)$ (see Dong et al.~\cite{dong22_gdp} for specifics of the definition).
\end{definition}
 Naturally, the Gaussian Mechanism satisfies GDP.
\begin{theorem}(Gaussian Mechanism for GDP~\cite{dong22_gdp})\label{thm:g_mech}
    Define the Gaussian mechanism that operates on a statistic $f$ as $M(D) = f(D) + \eta$, where $\eta \sim \mathcal{N}(0, (\Delta^{(f)})/\theta)$. Then, $M$ is $\theta$-GDP.
\end{theorem}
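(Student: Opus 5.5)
The plan is to read the second argument of $\mathcal{N}(0,\cdot)$ in the statement as the \emph{standard deviation} $s=\Delta^{(f)}/\theta$, applied independently to each of the $k$ coordinates, i.e.\ $\eta\sim\mathcal{N}(0,s^2 I_k)$. That is the reading under which the claim is the standard result of Dong et al. Fix neighbouring datasets $D,D'$ and write $u=f(D)$, $v=f(D')$. By Definition~\ref{def:sensitivity} with the $\ell_2$ metric, $\|u-v\|_2\le\Delta^{(f)}$.

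The distributions to compare are $P=\mathcal{N}(u,s^2I)$ and $Q=\mathcal{N}(v,s^2I)$, and the goal is $\mathcal{T}(P,Q)\ge G_\theta$. The first step uses the fact that trade-off functions are invariant under a bijective measurable map applied to both distributions, since rejection regions correspond one-to-one. Translating by $-u$, rotating so that $v-u$ lies along the first basis vector, and scaling by $1/s$ turns the pair into $\mathcal{N}(0,I)$ versus $\mathcal{N}(\mu e_1,I)$, where $\mu=\|u-v\|_2/s$.

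The second step reduces to one dimension. The likelihood ratio of these two product Gaussians depends only on the first coordinate, because the remaining coordinates have identical laws and are independent of it. By Neyman--Pearson, the optimal tests therefore threshold $x_1$. This gives $\mathcal{T}(P,Q)=\mathcal{T}(\mathcal{N}(0,1),\mathcal{N}(\mu,1))=G_\mu$.

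The final step uses monotonicity. Explicitly, $G_\mu(\alpha)=\Phi(\Phi^{-1}(1-\alpha)-\mu)$, which is pointwise nonincreasing in $\mu\ge0$. Since $\mu\le\Delta^{(f)}/s=\theta$, we get $G_\mu\ge G_\theta$. As $D,D'$ were arbitrary neighbours, $M$ is $\theta$-GDP by Definition~\ref{def:GDP}.

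The only step needing care is the reduction to one dimension. There I would justify the invariance of $\mathcal{T}$ under bijections directly from its definition as the infimum of type II error over tests with type I error at most $\alpha$. I would then note that the Neyman--Pearson lemma identifies the infimum attained by likelihood-ratio tests. The degenerate case $u=v$ is immediate, since then $\mathcal{T}(P,P)(\alpha)=1-\alpha=G_0(\alpha)\ge G_\theta(\alpha)$.
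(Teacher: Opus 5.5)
Your proof is correct. The paper gives no proof of this theorem; it imports it from Dong et al. Dong et al.\ state the result for a real-valued statistic with noise variance $(\Delta^{(f)})^2/\theta^2$. Their proof is one line: $\mathcal{T}\bigl(\mathcal{N}(a,s^2),\mathcal{N}(b,s^2)\bigr)=G_{|a-b|/s}$, combined with the monotonicity of $G_\mu$ in $\mu$.

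Your argument uses the same idea. Your rotation-plus-Neyman--Pearson reduction is exactly what is needed for the vector-valued, $\ell_2$-sensitivity version the paper actually relies on: the $n$-bin noisy histogram in Algorithm~\ref{alg:voting} and the $d$-dimensional noisy sums in Algorithm~\ref{alg:projected_PE}.

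Reading the second argument of $\mathcal{N}$ as a standard deviation is the right call, and it is necessary rather than a convenience. Under the variance reading, the normalized mean shift would be $\sqrt{\Delta^{(f)}\theta}$, which exceeds $\theta$ whenever $\Delta^{(f)}>\theta$, so the statement would be false. The standard-deviation reading also matches Dong et al.'s variance with the squares dropped, and the paper's own description of $\mathcal{N}(0,\sigma I_n)$ as noise ``with standard deviation $\sigma$'' in its privacy proof.
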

We consider the composition of GDP over multiple runs of a mechanism.
\begin{theorem}[GDP Composition~\cite{dong22_gdp}]\label{thm:gdp_comp}
    The $n$-fold composition of $\theta_i$-GDP mechanisms is $\sqrt{\theta_1^2+\cdots+\theta_n^2}$-GDP.
\end{theorem}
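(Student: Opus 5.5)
The plan is to work at the level of trade-off functions, as in Dong et al. There are two ingredients. First, a composition theorem for trade-off functions: if $M_i$ satisfies $f_i$-DP, allowing $M_i$ to depend adaptively on the outputs of $M_1,\dots,M_{i-1}$, then the composed mechanism is $f_1\otimes\cdots\otimes f_n$-DP. Here $\mathcal{T}(P_1,Q_1)\otimes\mathcal{T}(P_2,Q_2)=\mathcal{T}(P_1\times P_2,Q_1\times Q_2)$ is the tensor product. Second, the explicit identity $G_{\theta_1}\otimes\cdots\otimes G_{\theta_n}=G_{\sqrt{\theta_1^2+\cdots+\theta_n^2}}$. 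Given both, and using that $\otimes$ is monotone in each argument, the hypothesis $\mathcal{T}(M_i(D),M_i(D'))\ge G_{\theta_i}$ gives the stated bound.

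\textbf{Gaussian identity.} By definition, $\bigotimes_i G_{\theta_i}$ is the trade-off function between $\mathcal{N}(0,I_n)$ and $\mathcal{N}(\theta,I_n)$, where $\theta=(\theta_1,\dots,\theta_n)$. By Neyman--Pearson, the optimal tests threshold the likelihood ratio. Here the likelihood ratio is a monotone function of the projection $\langle x,\theta\rangle/\|\theta\|_2$. Under the two hypotheses this statistic is distributed as $\mathcal{N}(0,1)$ and as $\mathcal{N}(\|\theta\|_2,1)$ respectively. Hence the trade-off function is exactly $G_{\|\theta\|_2}$. Equivalently, one can rotate coordinates so that $\theta$ lies along $e_1$; the remaining coordinates are then identically distributed under both hypotheses and carry no information.

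\textbf{Composition theorem (the main obstacle).} I would prove the two-fold case and then induct on $n$. The needed facts are the following.
\begin{itemize}
\item Each $f$-DP condition can be realised by a simulation. If $\mathcal{T}(P,Q)\ge\mathcal{T}(\tilde P,\tilde Q)$, then there is a randomized post-processing map (a Blackwell garbling) sending $(\tilde P,\tilde Q)$ to $(P,Q)$. This is Blackwell's theorem for binary experiments.
\item For adaptive composition, condition on the output $y_1$ of $M_1$. For each fixed $y_1$, the pair $(M_2(D,y_1),M_2(D',y_1))$ is dominated by the same $(\mathcal{N}(0,1),\mathcal{N}(\theta_2,1))$.
\item Construct a single garbling of $(\mathcal{N}(0,1)^2,\mathcal{N}((\theta_1,\theta_2),1)^2)$ that outputs $(y_1,y_2)$. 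Apply the first-stage garbling to the first Gaussian coordinate, then apply the $y_1$-dependent second-stage garbling to the second coordinate.
\end{itemize}
The post-processing inequality for trade-off functions then gives $\mathcal{T}(M(D),M(D'))\ge G_{\theta_1}\otimes G_{\theta_2}$.

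The delicate points are measurability of the $y_1$-dependent garbling and applying Blackwell's theorem in the continuous setting. If this becomes heavy, I would instead cite Dong et al.'s general composition theorem for $f$-DP and only verify the Gaussian tensor identity above.

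Finally, the induction uses associativity: $(G_{\theta_1}\otimes\cdots\otimes G_{\theta_{n-1}})\otimes G_{\theta_n}=G_{\sqrt{\sum_{i<n}\theta_i^2}}\otimes G_{\theta_n}=G_{\sqrt{\sum_i\theta_i^2}}$, which completes the argument.
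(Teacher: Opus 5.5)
Your proposal is correct and reconstructs the standard argument of Dong et al.\ (adaptive $f$-DP composition via the tensor product, plus the rotation-invariance identity $G_{\theta_1}\otimes\cdots\otimes G_{\theta_n}=G_{\|\theta\|_2}$); the paper gives no proof of its own and simply cites their Corollary~2, so this is the same approach. Falling back on their general composition theorem, as you suggest, therefore matches the paper; if you want to avoid the measurability issue with $y_1$-dependent Blackwell garblings, replace each section test $\phi(y_1,\cdot)$ by the Neyman--Pearson threshold test for $G_{\theta_2}$ at the same level $\alpha(y_1)$, whose threshold is a measurable function of $y_1$, and do the analogous swap on the first coordinate.
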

We also have a way to convert between GDP and DP.
\begin{theorem}[GDP to DP~\cite{dong22_gdp, balle18_gaussian}]\label{thm:GDPtoDP}
    A mechanism is $\theta$-GDP if and only if it is $\big(\epsilon,\delta(\epsilon)\big)$-DP for all $\epsilon \geq 0$, where
    \[
    \delta(\epsilon)= \Phi\Big( -\frac{\varepsilon}{\theta} +\frac{\theta}{2} \Big)-
    e^{\epsilon}\Phi\Big(- \frac{\varepsilon}{\theta} - \frac{\theta}{2} \Big).
    \]
\end{theorem}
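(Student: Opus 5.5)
The plan is to pass through the trade-off-function (hypothesis-testing) characterisation of $(\epsilon,\delta)$-DP, as Dong et al.\ do. The standard fact I will use is that a mechanism $M$ is $(\epsilon,\delta)$-DP if and only if
\[
\mathcal{T}\big(M(D),M(D')\big) \ge f_{\epsilon,\delta}
\]
for all neighbouring $D,D'$, where $f_{\epsilon,\delta}(\alpha)=\max\{0,\,1-\delta-e^{\epsilon}\alpha,\,e^{-\epsilon}(1-\delta-\alpha)\}$. This holds because the DP inequality over all sets $S$ is exactly a bound on the type I/type II errors of every test. With this, the claim reduces to a statement purely about the curve $G_\theta$: we must show that $G_\theta \ge f_{\epsilon,\delta}$ holds precisely when $\delta \ge \delta(\epsilon)$, and that $G_\theta$ is the pointwise supremum of the family $\{f_{\epsilon,\delta(\epsilon)}\}_{\epsilon\ge 0}$.

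\emph{Forward direction.} Suppose $M$ is $\theta$-GDP. For each $\epsilon$, I will compute the smallest $\delta$ with $f_{\epsilon,\delta}\le G_\theta$. Since $G_\theta(\alpha)=\Phi(\Phi^{-1}(1-\alpha)-\theta)$ is convex and symmetric, this is a supporting-line computation: we need the smallest $\delta$ such that the line $1-\delta-e^{\epsilon}\alpha$ lies below $G_\theta$. Equivalently,
\[
\delta = \sup_\alpha \big(1-e^{\epsilon}\alpha-G_\theta(\alpha)\big).
\]
Writing $\alpha=\Phi(-t)$ makes the expression $1-e^\epsilon\Phi(-t)-\Phi(t-\theta)$. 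Setting the derivative to zero gives $e^{\epsilon}\varphi(t)=\varphi(t-\theta)$, that is, the likelihood-ratio threshold $t=\epsilon/\theta+\theta/2$. Substituting back yields
\[
\Phi\big(-\epsilon/\theta+\theta/2\big)-e^{\epsilon}\Phi\big(-\epsilon/\theta-\theta/2\big),
\]
which is $\delta(\epsilon)$. The other branch, $e^{-\epsilon}(1-\delta-\alpha)$, is handled by the symmetry of $G_\theta$. This step also shows the optimum is exactly $\delta(\epsilon)$, as in Balle--Wang's analytic Gaussian mechanism result.

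\emph{Converse.} Suppose $M$ is $(\epsilon,\delta(\epsilon))$-DP for every $\epsilon\ge0$. Then $\mathcal{T}(M(D),M(D'))\ge \sup_{\epsilon} f_{\epsilon,\delta(\epsilon)}$. It remains to show that this supremum equals $G_\theta$. A convex, continuous, symmetric trade-off function is the supremum of its supporting lines. For each slope $-e^{\epsilon}$ with $\epsilon\ge 0$, the computation above shows the tangent line is $1-\delta(\epsilon)-e^\epsilon\alpha$. The slopes with $\epsilon<0$ correspond, by symmetry, to the reflected branch $e^{-\epsilon}(1-\delta-\alpha)$. Hence the supremum recovers $G_\theta$ everywhere, and $M$ is $\theta$-GDP.

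The main obstacle is the converse. One must carefully justify the duality that the envelope of tangent lines reconstructs $G_\theta$, including at the endpoints where the slopes tend to $0$ or $-\infty$. I would handle this by invoking the general primal--dual result of Dong et al.\ (Prop.~2.12), which says a symmetric trade-off function $f$ equals $\sup_{\epsilon\ge0} f_{\epsilon,\delta(\epsilon)}$ with $\delta(\epsilon)=1+f^*(-e^{\epsilon})$, where $f^*$ is the convex conjugate. Combined with the conjugate computation above, this gives the result.
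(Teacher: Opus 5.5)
Your proposal is correct and follows the same route as the proof the paper relies on: the paper gives no proof of its own but cites Dong et al., whose Corollary 2.13 comes from the primal--dual Proposition 2.12, $\delta(\epsilon)=1+G_\theta^*(-e^{\epsilon})$, together with the tangent-point computation $t=\epsilon/\theta+\theta/2$ that you carry out. One notational fix in the converse: the tangent lines of slope $-e^{\epsilon}$ with $\epsilon<0$ are the reflected branches $e^{\epsilon}\big(1-\delta(-\epsilon)-\alpha\big)$ of $f_{-\epsilon,\delta(-\epsilon)}$, so that branch should be indexed by $-\epsilon=|\epsilon|$, not by $\epsilon$.
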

In practice, we follow the implementation of PE\footnote{\url{https://github.com/microsoft/DPSDA}} to solve this function for $\theta$, given a desired $\epsilon$ and $\delta$.

\section{Methodology}\label{sec:method}

\begin{algorithm}[t]
    \caption{\colorbox{gray!20}{\textbf{\ourPE}}~based on Private Evolution (PE)~\cite{pe}}
    \label{alg:mod_PE}
    \begin{algorithmic}[1]
        \renewcommand{\algorithmicrequire}{\textbf{Input:}}
        \renewcommand{\algorithmicensure}{\textbf{Output:}}
        \Require \begin{tabular}[t]{@{}l@{}}
            Private samples: $\Dataset=\brc{x_i}_{i=1}^{\DatasetSize}\subset \mathbb{R}^\Ddim$ \\
            Number of iterations: $\numiterations$ \\
            Number of generated samples: $\numgensamples \colorbox{gray!20}{= k}$ \\
            Number of initial variations: $\numVariations$\\
            Noise multiplier for the \dpvotingname{}: $\noisemultiplier$ 
        \end{tabular}
        \Ensure DP cluster centres $S'_T = \{\mu_1, \dots, \mu_k\} \subset \mathbb{R}^d$
        \Statex %
        \State $S_0 \leftarrow $\colorbox{gray!20}{\textsf{RANDOM\_API}}$(\numgensamples * (\numVariations+1))$ \LineLabel{line:random_api}
        \For{$t \leftarrow 1, \ldots, \numiterations$}
            \State $hist_t \leftarrow \dpvotingfunction{\Dataset, S_{t-1}, \noisemultiplier}$ \Comment{\cref{alg:voting}}\LineLabel{line:dp_voting}
            \State \tikzmk{A}\textcolor{gray}{$S_{t}' \leftarrow$ \textbf{select top-}$N_\syn$ \textbf{samples} from $S_{t-1}$ ranked by $\text{hist}_t$} \tikzmk{B} \strikeline{red}\LineLabel{line:top_k_select}
            \State \tikzmk{A}$S_{t}' \leftarrow$ \textsf{WEIGHTED\_K\_MEANS}($S_{t-1}$, k=$\numgensamples$, weight=$hist_t$)\tikzmk{B} \boxit{gray}\LineLabel{line:k_means_select}
            \State $S_{t} \leftarrow S_{t}'$ \Comment{Save best samples}\LineLabel{line:elitisim}
            \If{\tikzmk{A}sum($hist_t^2)/k(\numVariations+1) \noisemultiplier^2 < 1.0$\tikzmk{B} \boxit{gray}}\LineLabel{line:check_signal_noise}
            \State \tikzmk{A} $\numVariations \gets \numVariations/2 $ \tikzmk{B} \boxit{gray}\LineLabel{line:reduce_L}
            \EndIf
            \For{$i \gets 1, \dots, L$}
            \State $S_{t} \leftarrow S_{t} \cup $\colorbox{gray!20}{\textsf{VARIATION\_API}}$({S_t'})$ \LineLabel{line:variation}
            \EndFor
        \EndFor
    \State  \Return $S'_{T}$\LineLabel{line:return_centroids}
    \end{algorithmic}
\end{algorithm}

Private Evolution (PE) is an emerging private synthetic data generation technique that evolves a set of synthetic samples using only inference API access to large machine learning models~\cite{pe, augpe, pe_three,tran2026differentially}.
At a high level, PE starts from randomly generated synthetic data (using the \textsf{RANDOM\_API}) and iteratively evolves toward high-quality synthetic data by creating new variations (using the \textsf{VARIATION\_API}) and ranking the synthetic data that is closest to the private data.
We build upon the text variant of PE called \augpe~\cite{augpe}.
In Algorithm~\ref{alg:mod_PE}, we outline the baseline version of PE.
Algorithm~\ref{alg:mod_PE} also illustrates our approach, \ourPE, highlighting our modifications to both improve PE and adapt it to solve $k$-means clustering.
In this section, we detail the design of \ourPE~with respect to each part of the algorithm. Following this, we present our high-dimensional variant of \ourPE~in Section~\ref{sec:hd_pe} and prove the privacy of both algorithms in Section~\ref{sec:privacy}.

\subsection{Random API}\label{sec:random}
The first step in PE is to generate an initial population, independently of the private data, using the \textsf{RANDOM\_API} (Line~\ref{line:random_api}). 
In the earliest variants of PE \cite{pe,augpe}, this is done by querying a pre-trained foundation model (using information such as the label, which is assumed to be public).
To solve clustering with PE, rather than creating a population of synthetic data samples, we create a population of centroids to evolve.
We assume the only context available to generate the initial population is the shape of the data domain.
For simplicity, we assume the domain is a hyper-sphere and use the same $\ell_2$ bound $\radius$ on its radius that other approaches need to bound sensitivity~\cite{su_clustering, google_clustering, icml17_clustering, diaa2025fastlloyd}.
However, in our case, the bound being too tight or loose can at most affect convergence speed and has no effect on the amount of noise added.
A naive \textsf{RANDOM\_API} for clustering would simply generate uniformly random data within the hyper-sphere.
However, random points may group together, which reduces our coverage of the domain.
Thus, we instead follow the work of Su et al.~\cite{su_clustering} and use the sphere packing method.
Su et al.'s method starts with a given radius $a$ (typically half the radius of the domain), and iteratively adds random samples that are at least $a$ away from the boundary and $2a$ from the samples added so far. 
If, after a fixed number of retries, a random sample cannot be found to meet these conditions, $a$ is halved and the algorithm continues until $k$ samples are found.
We adapt Su et al.'s algorithm to consider an $\ell_2$ bounded hyper-sphere rather than an $\ell_\infty$ bounded hypercube.
The rest of the algorithm remains the same.

\subsection{Selection Step}\label{sec:selection}
\begin{algorithm}[t]
    \caption{\colorbox{gray!20}{\textbf{Modified}} \dpvotingfunctionname{}}
    \label{alg:voting}
    \begin{algorithmic}[1]
        \renewcommand{\algorithmicrequire}{\textbf{Input:}}
        \renewcommand{\algorithmicensure}{\textbf{Output:}}
        
        \Require \begin{tabular}[t]{@{}l@{}}
            Private samples: $\Dataset$ \\
            Generated samples: $S=\brc{z_i}_{i=1}^{n}$ \\
            Noise multiplier: $\noisemultiplier$ \\
            Distance function: $\distancefunction{\cdot,\cdot}$
        \end{tabular}
        
        \Ensure DP nearest neighbours histogram on $S$
        
        \Statex 
        
        \State $histogram \leftarrow [0, \dots, 0]$
        \For{$x_{\priv} \in \Dataset$}
            \State $i = \arg\min_{j \in \brb{n}} \distancefunction{x_{\priv}, z_j}$\LineLabel{line:nn_compute}
            \State $histogram[i] \leftarrow histogram[i] + 1$\LineLabel{line:vote}
        \EndFor
        
        \State $histogram \leftarrow histogram + \normaldistribution{0}{\noisemultiplier I_n}$ \LineLabel{line:noise_votes}
        
        \State \tikzmk{A}$histogram \gets \text{sort\_descending}(histogram)$ \LineLabel{line:sort_votes}
        
        \State$k \gets \min \brc{ \kappa \mid \sum_{j=1}^{\kappa} histogram[j] > \DatasetSize }$ \LineLabel{line:choose_large_votes}
        
        \State $histogram[j] \gets 0$ \textbf{for all} $j > k$  \Comment{Zero out noisy values} \tikzmk{B} \boxit{gray}\LineLabel{line:zero_small_votes}
        
        \State \Return $histogram$
    \end{algorithmic}
\end{algorithm}

The next step in PE is to select the best candidates from the population (either from the random initialization or previous iteration).
First, the selection computes a private histogram of votes, then selects candidates based on these votes.
We detail our modifications to both parts.

\paragraph{Histogram Computation}
To utilize the private data in the selection, we compute the \textsf{DP\_NN\_HISTOGRAM} (Line~\ref{line:dp_voting}) described in Algorithm~\ref{alg:voting}.
The idea is that all private samples vote for their nearest neighbour in the population (Lines~\ref{line:nn_compute} and \ref{line:vote}).
This creates a histogram of votes that is easy to privatize using Gaussian noise in Line~\ref{line:noise_votes}.
However, depending on the size of the population and progression of the evolution, this histogram can be quite sparse.
Some of the previous PE versions \cite{pe,pe_three,tran2026differentially} post-processed the histogram by zeroing out any noisy votes below a certain threshold.
The challenge is how to choose the optimal threshold without spending privacy budget.
We propose a modification based on the constraint that, before adding noise, the histogram of votes should be non-negative and sum to the size of the dataset (which we assume to be public).
The maximum likelihood estimation (MLE) of the true votes given these constraints is found by projecting the noisy votes onto the set $\{x\in \mathbb{R} | x_i\geq0, \sum_i x_i=\DatasetSize\}$.
Duchi et al.~\cite{Duchi08} give an efficient algorithm for this projection by computing a subset of bins with the largest noisy counts such that after scaling and thresholding they sum to exactly $\DatasetSize$.
For clustering, we do not need the sum constraint to be exact; a more critical issue is points far from the private data getting assigned a few false votes due to noise and pulling the centroids in the wrong direction.
Thus, we give a simple approximation of the MLE algorithm that computes the minimum number of highest-count buckets that exceed the sum constraint (Line~\ref{line:choose_large_votes}) and zeros out the remaining buckets (Line~\ref{line:zero_small_votes}), without any additional scaling.
Intuitively, we assume histogram buckets with large counts are likely to have dominated the noise.

\paragraph{Selecting Candidates}
After computing the private histogram, the next evolutionary task is to choose the best candidates from the population to survive to the next iteration based on this histogram.
\augpe~takes the approach of ranking candidates based on the number of votes they received in the histogram (Line~\ref{line:top_k_select}).
The drawback to this approach is that choosing the most popular points does not always yield optimal clustering coverage due to what we call vote splitting.
Consider a scenario where PE generates a point that is the nearest neighbour of many private dataset points.
In the next iteration, PE may generate numerous new samples close to this point, causing the votes to be split between these new samples.
Then, since all new points have a low count, an entire area of feature space may not be selected. %
We illustrate an example of this for clustering in Section~\ref{sec:vote_split}.
We argue that instead of just choosing points with the highest votes, we also need to choose one or two solutions from areas with highly split votes.
Our key observation is that standard $k$-means clustering algorithms have a similar objective.
Thus, we use the population as a coreset weighted by the histogram and run standard $k$-means, using the resulting centroids as the selected set for the next round (Line~\ref{line:k_means_select}).\footnote{We could instead use a $k$-medians algorithm to select candidates strictly from the population satisfying a similar objective. However, we see no reason to enforce that constraint in this work, as the standard $k$-means algorithm can move us closer to the true centroids while making the selection.}
We show that this technique is much more robust to vote splitting in Section~\ref{sec:vote_split}.

\subsection{Variation API}\label{sec:variation}
After selecting the best candidates, the next step is to continue exploring the space for better centroids by creating variations of the selected candidates.
We note that we preserve the best candidates (referred to as elitism in evolutionary algorithms) in Line~\ref{line:elitisim}, so that if no variation improves, we maintain the current progress.
In some of the previous versions of PE, the \textsf{VARIATION\_API} (Line~\ref{line:variation}) would be another API call to a foundation model.
For example, \augpe~\cite{augpe} would randomly mask words and ask a large language model to fill them in with a high temperature. 
In the case of clustering, our \textsf{VARIATION\_API} perturbs the real-valued candidate centroids directly.
We find that mutation based on Lévy flights works well, as the distribution is heavier-tailed to encourage exploration.
We follow Mantegna's algorithm~\cite{Mantegna94} to generate a Lévy stable distribution parameterized by the Lévy index $\beta$ which we denote by $\text{Lévy}(\beta)$.
To sample a value $\zeta \sim \text{Lévy}(\beta)$, we first compute
\begin{equation}
    \sigma_u = \left( \frac{\Gamma(1+\beta)\,\sin\!\left(\tfrac{\pi\beta}{2}\right)}
    {\Gamma\!\left(\tfrac{1+\beta}{2}\right)\cdot \beta \cdot 2^{(\beta-1)/2}} \right)^{1/\beta},
\end{equation}
then draw independent samples $u \sim \mathcal{N}(0, \sigma_u I_\Ddim)$ and $v \sim \mathcal{N}(0, I_\Ddim)$ that we combine to get $\zeta = \frac{u}{|v|^{1/\beta}}$.
Our variation API adds i.i.d. noise from this distribution to the existing centroids, scaled by a hyperparameter $\mutationScale$ and the radius $\radius$, namely
\begin{equation}
    \textsf{VARIATION\_API}(\mu_i) = \mu_i + \mutationScale \radius \zeta
\end{equation}
where $\zeta \sim \text{Lévy}(\beta)$. 
Finally, we clip any mutated sample with $\ell_2$ norm greater than $\radius$ to the boundary.
We experimentally set the hyperparameters for our variation API and validate the use of Lévy over Gaussian noise in Section~\ref{sec:var_params}.

\subsection{Adaptive Population Size}\label{sec:adaptive_size}
Following \augpe, we apply the variation API $\numVariations$ times to each sample to increase exploration (Line~\ref{line:variation}).
Tuning the $\numVariations$ parameter is critical.
If $\numVariations$ is too small, the algorithm converges slowly, needing more iterations and privacy budget.
Alternatively, if $\numVariations$ is too large, the histogram becomes sparse, resulting in a poor signal-to-noise ratio and poor-quality selections.
While we can tune $\numVariations$ as a hyperparameter (we show a heuristic for setting it in Section~\ref{sec:hypers}), the best setting for $\numVariations$ varies greatly depending on the distribution of the private data and the current distribution of the candidates.
To help avoid the more damaging case of a large $\numVariations$ preventing meaningful selection, we add adaptive tuning of $\numVariations$ during evolution.
In Line~\ref{line:check_signal_noise}, we estimate the signal-to-noise ratio (without using any privacy budget) by computing the $\ell_2$ norm of the noisy histogram vs. the expected $\ell_2$ norm of the noise.
We argue that if this value is less than $1$, there is likely more noise than signal, and we reduce $\numVariations$ by half (Line~\ref{line:reduce_L}). 
We experimented with different thresholds and amounts of reduction but found these settings work well in practice.

\subsection{High-Dimensional Case}\label{sec:hd_pe}

\begin{algorithm}[h]
    \caption{High-Dimensional \ourPE~(HD\ourPE)}
    \label{alg:projected_PE}
    \begin{algorithmic}[1]
        \renewcommand{\algorithmicrequire}{\textbf{Input:}}
        \renewcommand{\algorithmicensure}{\textbf{Output:}}
        \Require \begin{tabular}[t]{@{}l@{}}
            Private samples: $\Dataset = \{x_i\}_{i=1}^{\DatasetSize} \subset \mathbb{R}^\Ddim$ \\
            Reduced dimension: $\Rdim$ \\
            Number of clusters: $k$ \\
            Number of PE iterations: $T$ \\
            Number of PE variations: $\numVariations$ \\
            Noise multiplier: $\noisemultiplier$ \\
            Domain radius: $\radius$
        \end{tabular}
        \Ensure DP cluster centres $S = \{\mu_1, \dots, \mu_k\} \subset \mathbb{R}^d$
        \Statex 
        
        \State Sample random projection matrix $G \sim \mathcal{N}(0,1)^{p \times \Ddim}$ \LineLabel{line:sample_G}
        \State $\bar{\Dataset} \leftarrow \{\bar{x}_i\}_{i=1}^{\DatasetSize}$ where $\bar{x}_i = \frac{1}{\sqrt{\Ddim}} G x_i$ \Comment{Project data to $\mathbb{R}^\Rdim$} \LineLabel{line:project_data}
        \State $\{\bar{\mu}_1, \dots, \bar{\mu}_k\}\leftarrow \textsf{\ourPE}(\bar{\Dataset}, T-2, k, L, \sigma)$ \Comment{Get centroids in $\mathbb{R}^\Rdim$ using Algorithm~\ref{alg:mod_PE}}\LineLabel{line:run_pe}
        
        \For{$i \leftarrow 1, \ldots, \DatasetSize$} \LineLabel{line:label_start}
            \State $a_i \leftarrow \arg\min_{j \in \{1, \dots, k\}} \|\bar{x}_i - \bar{\mu}_j\|_2$ \Comment{Assign labels in $\mathbb{R}^\Ddim$}
        \EndFor \LineLabel{line:label_end}
        
        \For{$j \leftarrow 1, \ldots, k$} \LineLabel{line:agg_start}
            \State $z_j \leftarrow \sum_{i : a_i = j} x_i + \mathcal{N}(0, \radius \noisemultiplier I_n)$ \Comment{Noisy sum in $\mathbb{R}^\Ddim$}
            \State $n_j \leftarrow \sum_{i : a_i = j} 1 + \mathcal{N}(0, \noisemultiplier I_n)$ \Comment{Noisy count}
            \State $\mu_j \leftarrow \frac{z_j}{n_j}$ 
        \EndFor \LineLabel{line:agg_end}
        
        \State \Return $S = \{\mu_1, \dots, \mu_k\}$ \LineLabel{line:return_final}
    \end{algorithmic}
\end{algorithm}

In high-dimensional domains (especially when $k$ is small), variations become less effective due to the curse of dimensionality.
To combat this, in Algorithm~\ref{alg:projected_PE}, we present a modified version of \ourPE~that is optimized for higher dimensions.
We follow the work of Balcan et al.~\cite{icml17_clustering} and first use a Johnson-Lindenstrauss transform to project the private data to a low-dimensional space (Line~\ref{line:project_data}).
We then run \ourPE~as normal in this low-dimensional space in Line~\ref{line:run_pe}.
To project the resulting cluster centres back to the original domain, we use noisy averaging. This can be thought of as a single iteration of DP Lloyd's algorithm~\cite{su_clustering} in the original domain, initialized with the low-dimensional cluster assignments.
Specifically, all private data points are assigned a label using the low-dimensional centroids (Lines~\ref{line:label_start} to \ref{line:label_end}).
Then we compute the sum and count of all private points for a given label to obtain the centroids in high-dimensional space (Lines~\ref{line:agg_start} to \ref{line:agg_end}).
To ensure privacy, we add Gaussian noise to the sum and count of points in each cluster.
We reduce the number of iterations for PE by two in Line~\ref{line:run_pe}, so we can use the same noise multiplier to noise the sum and count. 
We show in Section~\ref{sec:experiments} that HD\ourPE~outperforms \ourPE~in higher dimensions.
The disadvantage of this approach is that, similar to related work~\cite{google_clustering, icml17_clustering}, we add noise proportional to $\radius$ (the radius of the data domain) during the projection back to high dimensions. However, most other approaches apply strictly more noise, by applying radius-proportional noise in each iteration. Our approach takes advantage of the low sensitivity of \ourPE~to fit the centroids accurately in low-dimensional space before conducting a single iteration of radius-proportional noise in the final iteration.\footnote{We note that \google~also only uses a single iteration of radius-proportional noise. However, their approach assigns cluster membership based on a single LSH partition of the space, whereas we iteratively optimize cluster membership with PE, yielding a more accurate clustering.}
In future work, we will investigate alternate ways to scale the variation API while reducing this dependency.

\subsection{Privacy Analysis}\label{sec:privacy}
The privacy analysis of \ourPE~follows from the original PE paper~\cite{pe} as our changes do not affect noise addition in Algorithm~\ref{alg:voting} or use private data in any way. We restate the analysis here for completeness.
\begin{theorem}\label{thm:privacy_PE}
    \ourPE~as defined in Algorithm~\ref{alg:mod_PE} satisfies $(\epsilon,\delta)$-DP.
\end{theorem}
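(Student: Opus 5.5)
We will argue that Algorithm~\ref{alg:mod_PE} is an adaptive composition of $\numiterations$ Gaussian mechanisms followed by post-processing, and then convert the resulting GDP guarantee to $(\epsilon,\delta)$-DP. The private data $\Dataset$ enters Algorithm~\ref{alg:mod_PE} in exactly one place: the call to \dpvotingfunctionname{} in Line~\ref{line:dp_voting}. Every other step depends only on the noisy histograms $hist_1,\dots,hist_t$ and on randomness independent of $\Dataset$. This includes the \textsf{RANDOM\_API} initialization, the weighted $k$-means selection, the signal-to-noise check and halving of $\numVariations$, and the \textsf{VARIATION\_API}. The first step is therefore to make this explicit: conditioned on the public inputs and on the previous outputs, the population $S_{t-1}$ is a fixed set, and iteration $t$ releases only one noisy histogram.

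Next we bound the sensitivity of the un-noised vote histogram for a fixed population $S_{t-1}=\{z_j\}_{j=1}^n$. Under the unbounded neighbouring relation, $\Dataset'$ differs from $\Dataset$ by adding or removing one point $x$. That point casts exactly one vote, to $\arg\min_j \distancefunction{x,z_j}$ with a fixed tie-breaking rule. Every other point's vote is unchanged, since nearest neighbours are computed independently per point. Hence the two histograms differ by $1$ in a single coordinate, and the $\ell_2$-sensitivity is $\Delta=1$. By \cref{thm:g_mech}, adding $\normaldistribution{0}{\noisemultiplier I_n}$ (read as standard deviation $\noisemultiplier$ per coordinate) gives a $\theta$-GDP mechanism with $\theta=1/\noisemultiplier$. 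Lines~\ref{line:sort_votes}--\ref{line:zero_small_votes} of Algorithm~\ref{alg:voting} use only the noisy histogram and the public size $\DatasetSize$. So by post-processing, the whole of Algorithm~\ref{alg:voting} is $1/\noisemultiplier$-GDP.

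Then we apply \cref{thm:gdp_comp} over the $\numiterations$ iterations. The mechanisms are chosen adaptively, since $S_{t-1}$ depends on earlier outputs, but the composition theorem of Dong et al.\ holds for adaptive composition. This yields $\sqrt{\numiterations}/\noisemultiplier$-GDP for the full transcript. The returned $S'_\numiterations$ is a post-processing of that transcript, so it inherits the same guarantee. Finally, \cref{thm:GDPtoDP} with $\theta=\sqrt{\numiterations}/\noisemultiplier$ gives $(\epsilon,\delta(\epsilon))$-DP. Here $\noisemultiplier$ is chosen, as in the PE implementation, so that $\delta(\epsilon)$ equals the target $\delta$.

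The main obstacle is the bookkeeping in the first step: showing rigorously that the new components (the adaptive change of $\numVariations$, the weighted $k$-means selection, and the thresholding of Algorithm~\ref{alg:voting}) touch $\Dataset$ only through the noisy histograms. Only then is the analysis truly an adaptive composition of $\numiterations$ Gaussian mechanisms with fixed sensitivity $1$. A further subtlety is that the histogram's length $n$ changes across iterations as $\numVariations$ changes. This is harmless because the sensitivity bound does not depend on $n$.
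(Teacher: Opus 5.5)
Your proposal is correct and follows the paper's proof essentially step for step: the vote histogram has sensitivity $1$, so each iteration is a $1/\sigma$-GDP Gaussian mechanism. Adaptive GDP composition over the $T$ iterations then gives $\sqrt{T}/\sigma$-GDP, and $\sigma$ is chosen via \cref{thm:GDPtoDP} to obtain the target $(\epsilon,\delta)$. Your extra bookkeeping treats the thresholding in Algorithm~\ref{alg:voting}, the weighted $k$-means selection, the halving of $L$, and the changing histogram length as post-processing of earlier noisy outputs; this just spells out what the paper asserts in one line, namely that its modifications touch the private data only through the noisy histogram.
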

\begin{proof}
We break the proof into the same steps as the original paper~\cite{pe}:

\begin{itemize}
    \item \textbf{Step 1: Bounding the sensitivity (Definition~\ref{def:sensitivity}) of the \dpvotingname{} in Algorithm~\ref{alg:voting}.} %
    Each private sample only contributes one vote. If we add or remove one sample, the resulting histogram will change by 1 in the $\ell_2$ norm. Therefore, the sensitivity is 1.
    \item \textbf{Step 2: Viewing each PE iteration as a Gaussian mechanism.} In Line~\ref{line:noise_votes} of Algorithm~\ref{alg:voting}, we add i.i.d. Gaussian noise with standard deviation $\noisemultiplier$ to each bin. This is the only part of the algorithm that touches the private dataset. %
    \item \textbf{Step 3: Viewing the entire PE algorithm as $\numiterations$ adaptive compositions of Gaussian mechanisms.} This holds because PE simply applies \cref{alg:voting} $\numiterations$ times sequentially.
    \item \textbf{Step 4: Viewing the entire PE algorithm as one Gaussian mechanism with noise multiplier $\noisemultiplier/\sqrt{\numiterations}$.} Using the composition theorem (Theorem~\ref{thm:gdp_comp}) from Dong et al.~\cite[Corollary 2]{dong22_gdp}, we get that $\numiterations$ applications of a $1/\noisemultiplier$-GDP algorithm are $\sqrt{T}/\noisemultiplier$-GDP.
    \item \textbf{Step 5: Computing DP parameters $\epsilon$ and $\delta$.} The problem is simply computing a $\noisemultiplier$ such that $(\epsilon,\delta)$-DP iff $\sqrt{T}/\noisemultiplier$-GDP, for which we apply the formula of Balle and Wang~\cite{balle18_gaussian}.
\end{itemize}
\end{proof}  
Our high-dimensional extension of PE includes additional use of the private data in the projection, and thus we state and prove its privacy.
\begin{theorem}
    HD\ourPE~as defined in Algorithm~\ref{alg:projected_PE} satisfies $(\epsilon,\delta)$-DP.
\end{theorem}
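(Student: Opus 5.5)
We will prove the privacy of HD\ourPE{} by writing Algorithm~\ref{alg:projected_PE} as an adaptive composition of Gaussian mechanisms, each $1/\noisemultiplier$-GDP. We then apply Theorem~\ref{thm:gdp_comp} and Theorem~\ref{thm:GDPtoDP} exactly as in the proof of Theorem~\ref{thm:privacy_PE}.

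\textbf{Data-independent and per-record steps.} The projection matrix $G$ (Line~\ref{line:sample_G}) is sampled independently of the data, so we may condition on it. The map $x_i\mapsto \bar{x}_i$ is applied record by record. Hence neighbouring datasets $\Dataset,\Dataset'$ map to neighbouring projected datasets $\bar{\Dataset},\bar{\Dataset}'$, and any privacy guarantee for a mechanism run on $\bar{\Dataset}$ transfers to $\Dataset$. By Theorem~\ref{thm:privacy_PE} (Steps 1--4 of its proof), the call to \ourPE{} with $T-2$ iterations in Line~\ref{line:run_pe} is $\sqrt{T-2}/\noisemultiplier$-GDP. Its output $\{\bar\mu_j\}$ is then public. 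The label assignment in Lines~\ref{line:label_start}--\ref{line:label_end} is a per-record function of the data and these released centroids, so it releases nothing by itself.

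\textbf{Final aggregation.} Conditioned on $\{\bar\mu_j\}$, the vector of per-cluster sums $(\sum_{i:a_i=j}x_i)_{j}$ is a function of the data. Adding or removing one record changes exactly one block, by a vector of $\ell_2$ norm at most $\radius$. This uses the assumption $\|x\|_2\le\radius$. The $\ell_2$-sensitivity of the concatenated sums is therefore $\radius$. Adding $\calN(0,\radius\noisemultiplier I)$ noise to it is then $1/\noisemultiplier$-GDP by Theorem~\ref{thm:g_mech}, interpreting the variance parameter as standard deviation, consistently with the paper. Likewise, the count vector has $\ell_2$-sensitivity $1$, so the noisy counts are $1/\noisemultiplier$-GDP. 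The division $\mu_j=z_j/n_j$ is post-processing.

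\textbf{Composition.} Altogether we have an adaptive composition of $T-2$ histogram releases and two aggregation releases, each $1/\noisemultiplier$-GDP. By Theorem~\ref{thm:gdp_comp}, the whole algorithm is $\sqrt{T}/\noisemultiplier$-GDP, the same guarantee as \ourPE{} with $T$ iterations. Choosing $\noisemultiplier$ via Theorem~\ref{thm:GDPtoDP} as in Step 5 of that proof yields $(\epsilon,\delta)$-DP.

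The main subtlety is the aggregation step. One must argue that the labels $a_i$ depend on the data only through the already-released centroids and the individual record itself, so that a neighbouring change affects only one cluster's sum and count. One must also note that GDP composition permits this adaptivity. This is the step we would state most carefully.
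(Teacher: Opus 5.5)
Your proof is correct and follows the paper's argument: $T-2$ PE iterations, followed by label assignment as post-processing of the released centroids, and then two Gaussian mechanisms (per-cluster sums with $\ell_2$-sensitivity $R$, counts with sensitivity $1$), all composed to $\sqrt{T}/\sigma$-GDP and converted to $(\epsilon,\delta)$-DP. The differences are minor: you bound the sensitivity of the concatenated per-cluster vectors directly where the paper invokes parallel composition across clusters, and you explicitly justify the data-independent projection $G$, which the paper leaves implicit.
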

\begin{proof}
~
\begin{itemize}
    \item \textbf{Step 1: Composition accounting.} We first assume that PE is called (in Line~\ref{line:run_pe}) with two fewer iterations than the number of iterations $\noisemultiplier$ was computed for.
Applying Theorem~\ref{thm:privacy_PE}, we get that the run of PE is private with two extra applications of a Gaussian mechanism with standard deviation of $\noisemultiplier$ remaining.
\item \textbf{Step 2: Assignment as post-processing.} After calling PE, the assignment step (Lines~\ref{line:label_start} to \ref{line:label_end}) uses the published centroids from the previous iteration (post-processing) to divide the dataset into clusters. We can then apply parallel composition over each of the clusters.
Thus, we can focus on the privacy cost of a single cluster for the remainder of the proof.
\item \textbf{Step 3: Privacy of the Gaussian mechanisms.} For each cluster, we apply the Gaussian mechanism twice.
The first is to compute a sum which has an $\ell_2$ sensitivity of $\radius$ since adding or removing a point can at most add the largest possible vector in the domain.
The second is to compute the count, which has an $\ell_2$ sensitivity of $1$, as a data point can be counted at most once.
Applying Theorem~\ref{thm:g_mech}, the result follows.
\end{itemize}

\end{proof}

\begin{figure*}[h!]
    \centering
    \includegraphics[width=\linewidth]{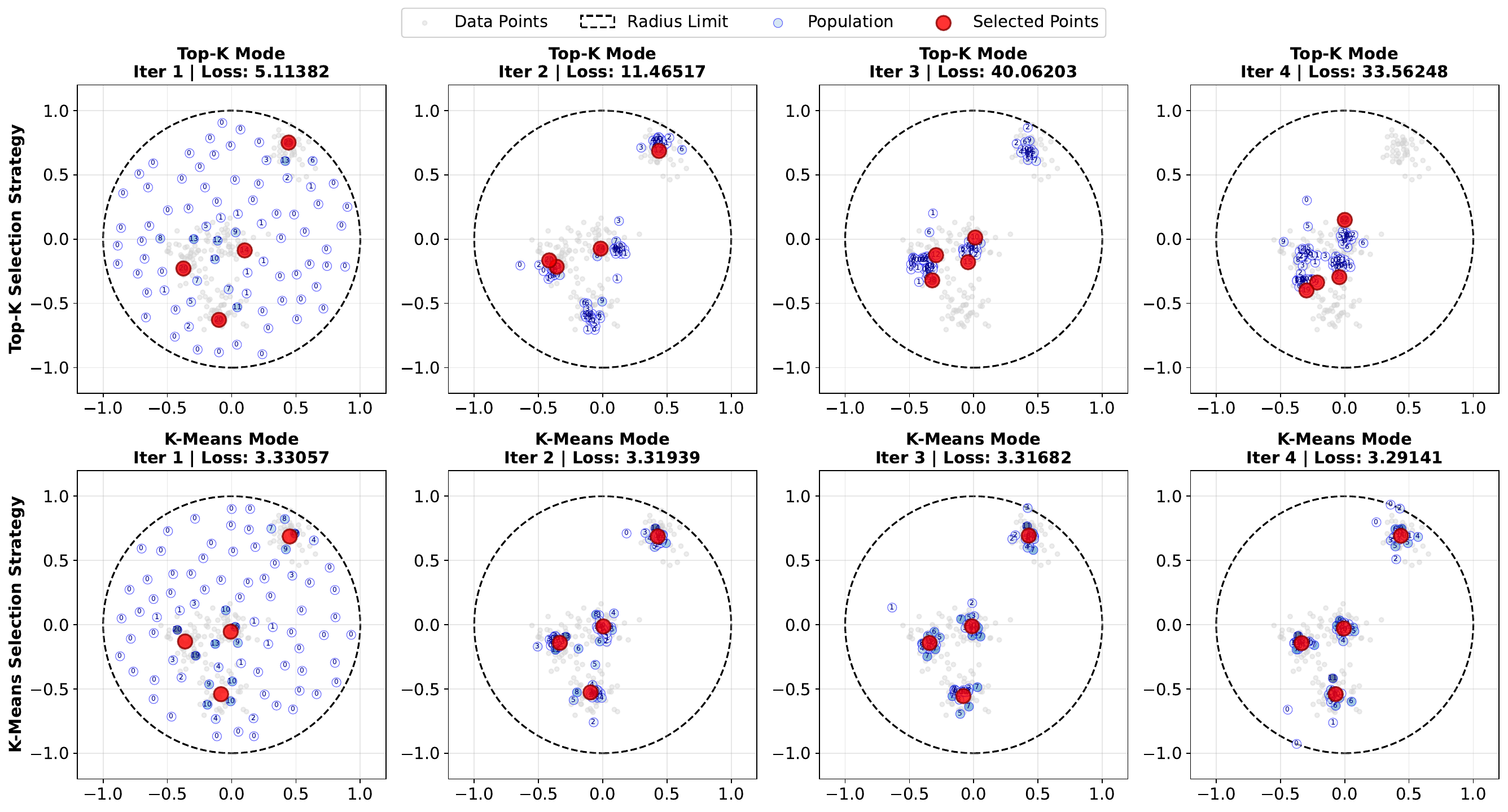}
    \caption{A visualization of the private data, samples generated, and samples selected by PE in each iteration, comparing the previous top-$k$ selection approach to our proposed $k$-means approach with $\epsilon=\infty$. The $k$-means approach effectively mitigates the vote splitting problem and keeps selected points close to the true clusters.}
    \label{fig:vote_split}
\end{figure*}

\section{Experiments}\label{sec:experiments}
In this section, we provide a complete experimental evaluation of \ourPE~and HD\ourPE. We begin by detailing the experimental setup in Section~\ref{sec:exp_setup}, followed by a case study highlighting the effect of vote splitting in practice in Section~\ref{sec:vote_split}. Section~\ref{sec:hypers} chooses and validates effective default hyperparameters for our approach. Finally, Section~\ref{sec:exp_results} gives a complete utility benchmark compared to the baselines with additional discussion in Section~\ref{sec:discussion}.
\begin{figure*}[h]
    \centering
    \includegraphics[width=0.85\linewidth]{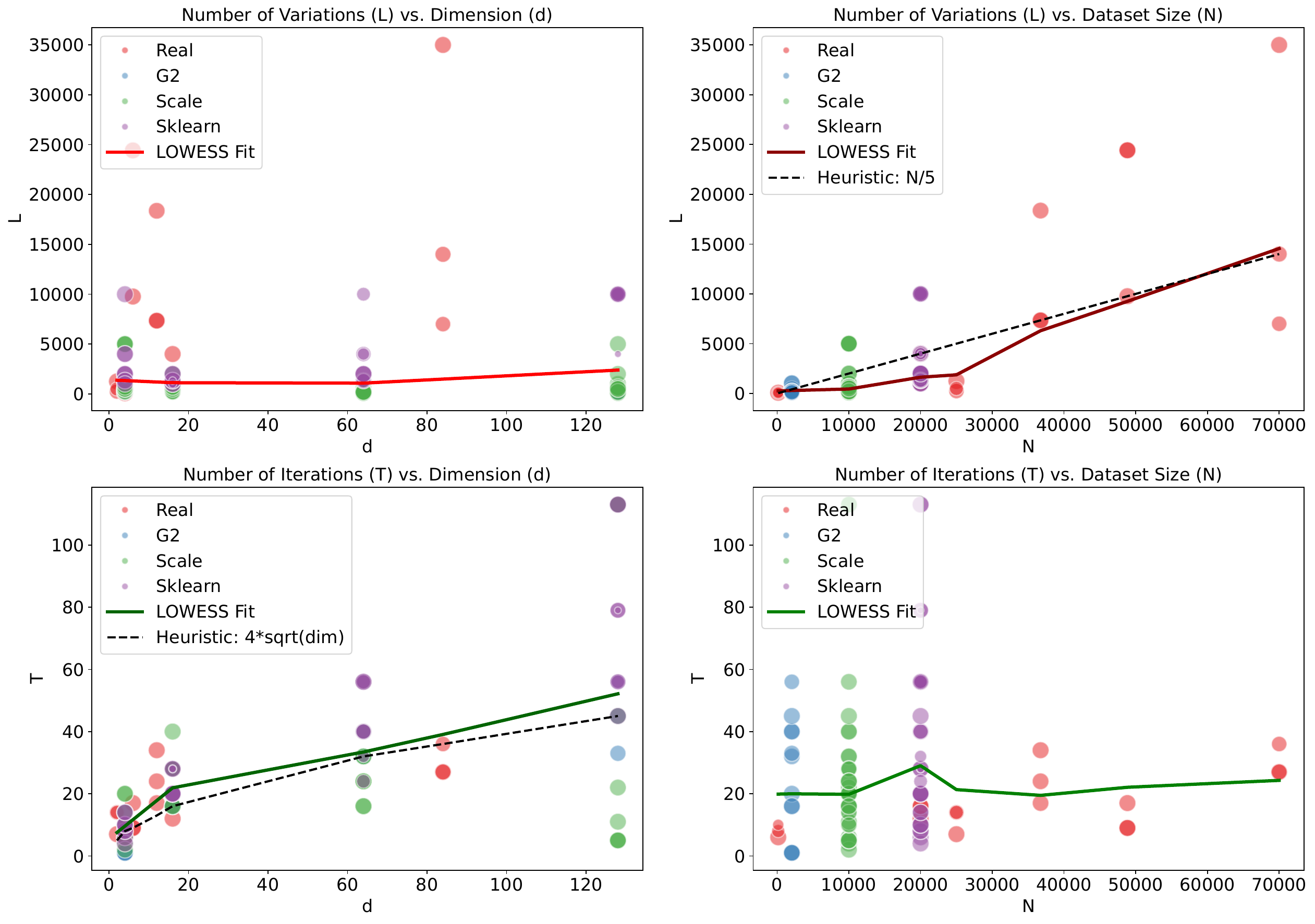}
    \caption{A scatter plot of the top-$3$ parameter configurations  of $\numVariations$ and $T$ for each dataset at $\epsilon=1$. The LOWESS line of best fit shows a clear relationship between $\numVariations$ and $\DatasetSize$ and between $T$ and $\Ddim$ that are captured by our heuristics.}
    \label{fig:hyper_tuning}
\end{figure*}
\subsection{Experimental Setup}\label{sec:exp_setup}
\paragraph{Datasets}
 We use a combination of real and synthetic datasets from various sources. Most of the real datasets as well as the G2 (Gaussian-based synthetic) datasets come from the clustering datasets repository~\cite{ClusteringDatasets}, with some additional real datasets from the UCI machine learning repository~\cite{UCI_Repo}. 
For Birch2~\cite{Birchsets}, we take $25,000$ random samples from the dataset of $100,000$.
Gas~\cite{UCI_Repo}, Letter~\cite{UCI_Repo}, and MNIST~\cite{lecun1998mnist} are included for comparability with Chang and Kamath~\cite{google_clustering}.
For MNIST, we train a LeNet5 model and use neural representations following Chang and Kamath~\cite{google_clustering}.
The \textit{scale} synthetic datasets from Diaa et al.~\cite{diaa2025fastlloyd} were generated with the \texttt{clusterGeneration} R package~\cite{clusterGen} and include a random level of cluster overlap.
Finally, we include synthetic datasets generated with the \texttt{make\_blobs} function from Sklearn that generates isotropic Gaussian clusters. Table~\ref{tab:auc_results} includes a complete list of datasets and their sizes.
\begin{figure}[h!]
    \centering
    \includegraphics[width=\linewidth]{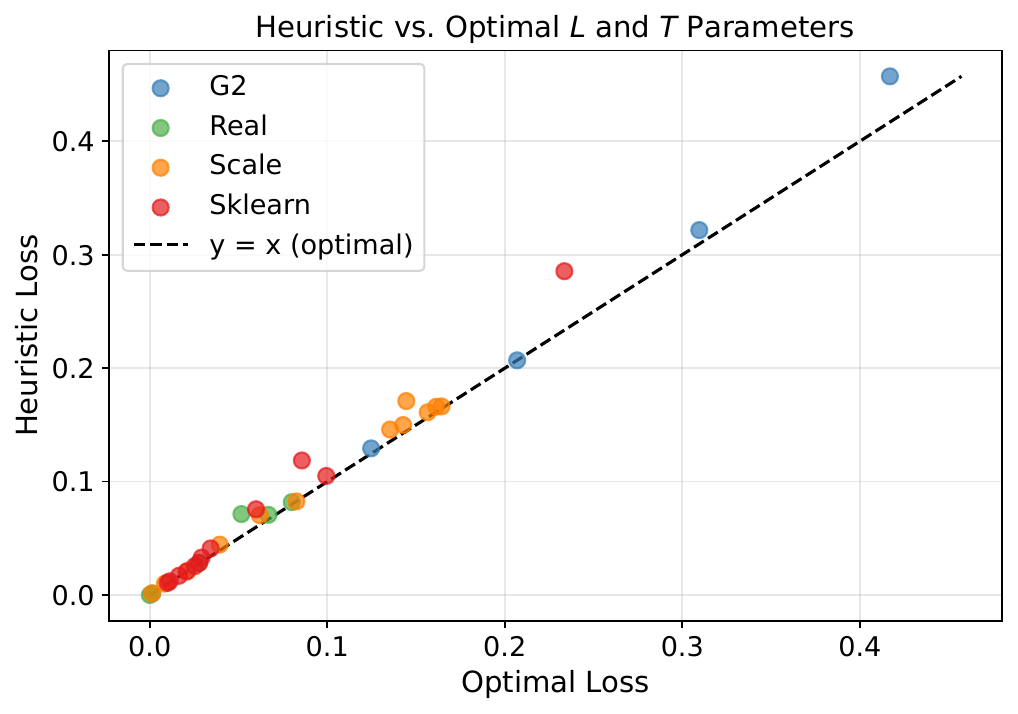}
    \caption{A scatter plot of grid search results over $\numVariations$ and $\numiterations$ with $\epsilon=1$. There is one point per dataset with the x-axis shows the clustering loss of the optimal parameter setting in the grid and the y-axis showing the loss of our heuristic for that dataset. We observe that the loss of our heuristic parameter settings are close to the optimal loss for all datasets.}
    \label{fig:LT_gap}
\end{figure}
\paragraph{Baselines}
Our baselines are selected from state-of-the-art DP $k$-means algorithms with publicly available implementations.
The work of Chang and Kamath (which we denote by \textit{\google}) is perhaps the most popular approach~\cite{google_clustering}. Chang and Kamath use a locality-sensitive hash tree to create a private coreset, upon which the non-private $k$-means algorithm is applied. We modify their code to fit our assumption that the size of the dataset is public (requiring less noise). Because the \google~method is one of the most recent works, we also evaluate the approaches from its evaluation. 
The first is IBM's differential privacy library~\cite{diffprivlib} (which we denote by \textit{\diffpriv}) that implements an improved version of Su et al.'s algorithm~\cite{su_clustering}. 
Su et al. apply noise to each step of Lloyd's clustering algorithm~\cite{lloyd1982} with various optimizations such as the sphere packing initialization. 
The second is the work of Balcan et al. (which we denote by \textit{\balcan}) that first projects the data to a low-dimensional space before recursively dividing the space into cubes. They then apply a $k$-medians-style swapping algorithm to choose the best centres before projecting the result back into the high-dimensional space through noisy averaging. We discovered several inconsistencies in the privacy analysis of Balcan et al.'s implementation, which we detail in the documentation of our implementation. 
We note that both \diffpriv~and \balcan~work in the pure differential privacy model and thus, following \google~\cite{google_clustering}, we evaluate them with $\delta=0$.
The final related work is a recent improvement over Su et al.'s algorithm by Diaa et al.~\cite{diaa2025fastlloyd}, which we denote by \textit{\fastlloyd}. \fastlloyd~modifies Su et al.'s work by using Gaussian noise and computing cluster updates relative to the previous iteration, thereby reducing the sensitivity.
\begin{figure}[t]
    \centering
    \includegraphics[width=\linewidth]{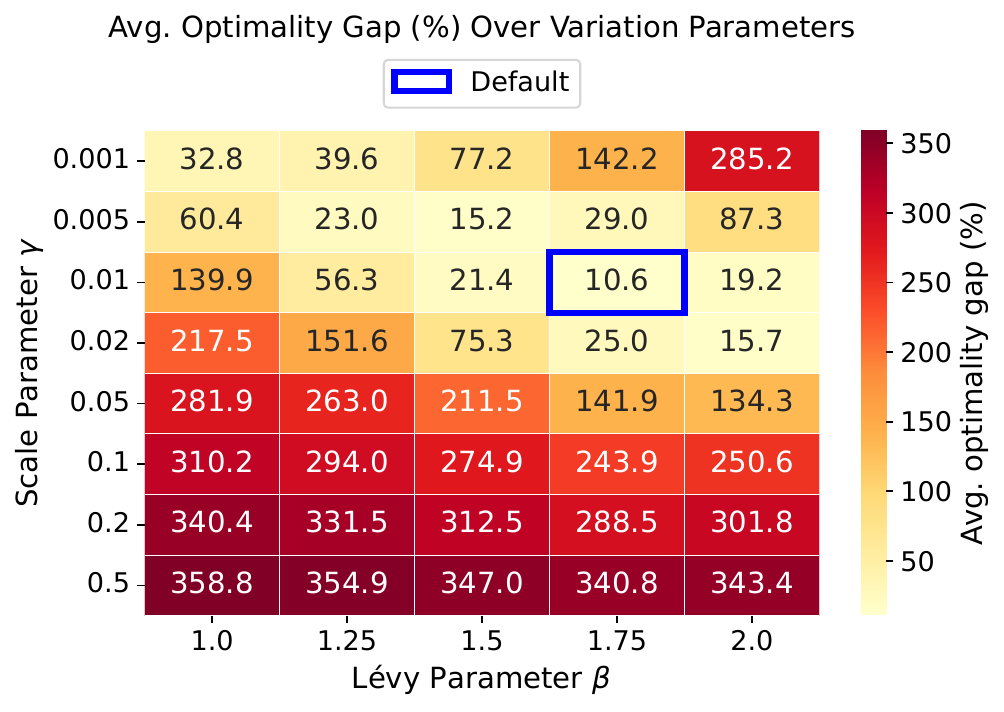}
    \caption{A heat map of the grid search results over $\mutationScale$ and $\beta$ at $\epsilon=1$ with each cell showing the percentage increase in clustering loss for a given hyperparameters setting compared to the optimal hyperparamter choice for each dataset, averaged over all datasets. We observe our default parameter choice is the closest to the optimal loss.}
    \label{fig:mut_heatmap}
\end{figure}
\begin{figure}[t]
    \centering
    \includegraphics[width=\linewidth]{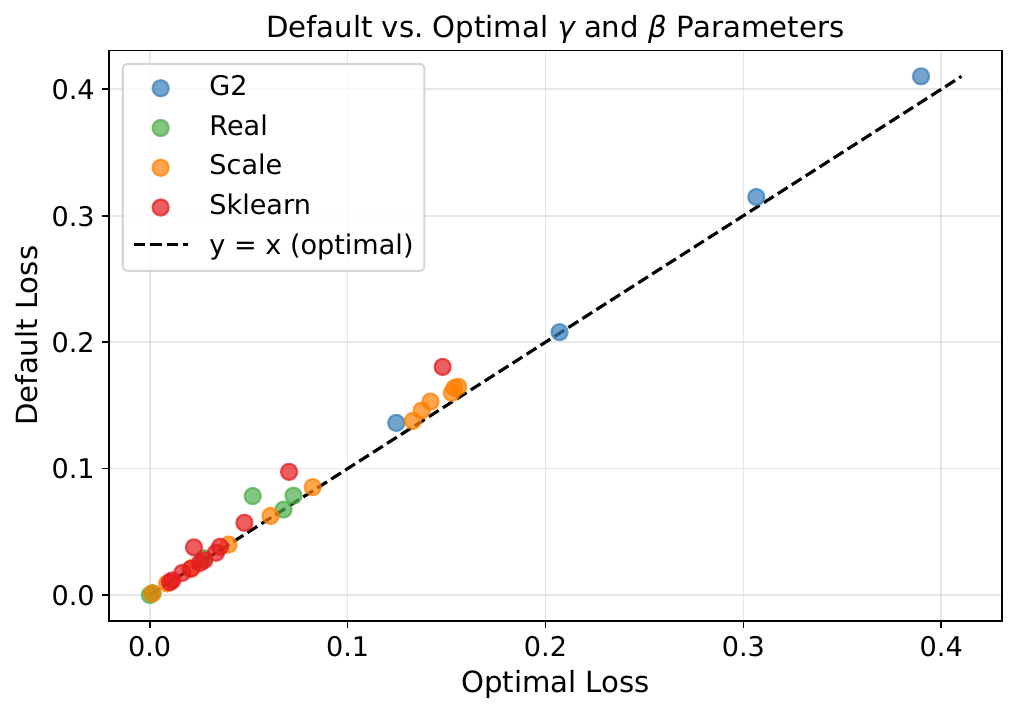}
    \caption{A scatter plot of grid search results over $\mutationScale$ and $\beta$ with $\epsilon=1$. There is one point per dataset with the x-axis shows the clustering loss of the optimal parameter setting in the grid and the y-axis showing the loss of our default parameter values. We observe that the loss of our default parameter settings are close to the optimal loss for all datasets.}
    \label{fig:mut_scatter}
\end{figure}
\afterpage{
\begin{table*}[ht]
\centering
\setlength{\tabcolsep}{4pt}       %
\renewcommand{\arraystretch}{0.9} %
\caption{A summary of results over all datasets (real datasets highlighted in bold) showing the AUC of the clustering loss across $\epsilon$ values with the best approach highlighted for each dataset. The final column shows the relative improvements of PE-based techniques over the state-of-the art baseline for that dataset, with an average improvement of $26\%$ over all datasets for PE-based approaches.}
\label{tab:auc_results}
\resizebox{\linewidth}{!}{%
\begin{tabular}{lrrr|r|rrrrrr|r}
\toprule
Dataset & $N$ & $d$ & $k$ & Non-Priv & PE-means & HDPE-means & \google & \fastlloyd & \diffpriv & \balcan & PE Improv. \\
\midrule
\textbf{birch2} & 25000 & 2 & 100 & 0.0001 & \bfseries {\cellcolor[HTML]{D4EDDA}} 0.0003 & - & 0.0658 & 0.0031 & 0.0124 & 0.0161 & +90.92\% \\
\textbf{iris} & 150 & 4 & 3 & 0.1361 & \bfseries {\cellcolor[HTML]{D4EDDA}} 0.2894 & - & 0.6892 & 0.3979 & 1.3048 & 5.5766 & +27.28\% \\
\textbf{adult} & 48842 & 6 & 3 & 0.0056 & \bfseries {\cellcolor[HTML]{D4EDDA}} 0.0056 & - & 0.0076 & 0.0155 & 0.0213 & 0.0113 & +25.55\% \\
\textbf{mnist} & 70000 & 84 & 10 & 0.1958 & 0.2920 & \bfseries {\cellcolor[HTML]{D4EDDA}} 0.2169 & 0.2173 & 0.5822 & 0.5315 & 0.7904 & +0.16\% \\
\textbf{letter} & 20000 & 16 & 26 & 0.2481 & \bfseries {\cellcolor[HTML]{D4EDDA}} 0.2852 & - & 0.3187 & 0.3963 & 0.4341 & 0.6277 & +10.50\% \\
\textbf{gas} & 36733 & 12 & 6 & 0.1026 & \bfseries {\cellcolor[HTML]{D4EDDA}} 0.1081 & - & 0.1121 & 0.1561 & 0.1761 & 0.1853 & +3.60\% \\
g2\_4 & 2048 & 4 & 2 & 0.4650 & 0.4761 & - & 0.5465 & \bfseries {\cellcolor[HTML]{D4EDDA}} 0.4666 & 0.5468 & 0.8802 & -2.05\% \\
g2\_16 & 2048 & 16 & 2 & 0.7677 & \bfseries {\cellcolor[HTML]{D4EDDA}} 0.7800 & - & 0.8391 & 0.8687 & 1.1816 & 4.7489 & +7.04\% \\
g2\_64 & 2048 & 64 & 2 & 1.0956 & 1.1538 & \bfseries {\cellcolor[HTML]{D4EDDA}} 1.1295 & 1.2478 & 1.9647 & 2.6608 & 27.5361 & +9.49\% \\
g2\_128 & 2048 & 128 & 2 & 1.3534 & 1.7237 & \bfseries {\cellcolor[HTML]{D4EDDA}} 1.4055 & 1.5906 & 2.6425 & 4.4632 & 74.0978 & +11.64\% \\
scale\_4\_4 & 10000 & 4 & 4 & 0.0821 & \bfseries {\cellcolor[HTML]{D4EDDA}} 0.0814 & - & 0.1190 & 0.1272 & 0.1381 & 0.1772 & +31.54\% \\
scale\_4\_16 & 10000 & 16 & 4 & 0.3013 & \bfseries {\cellcolor[HTML]{D4EDDA}} 0.3147 & - & 0.3392 & 0.3721 & 0.4114 & 0.5948 & +7.24\% \\
scale\_4\_64 & 10000 & 64 & 4 & 0.4966 & \bfseries {\cellcolor[HTML]{D4EDDA}} 0.5326 & 0.5340 & 0.5430 & 0.5863 & 0.6622 & 2.5354 & +1.92\% \\
scale\_4\_128 & 9999 & 128 & 4 & 0.5731 & 0.6179 & \bfseries {\cellcolor[HTML]{D4EDDA}} 0.5940 & 0.5987 & 0.6282 & 0.7481 & 8.5885 & +0.79\% \\
scale\_16\_4 & 10000 & 4 & 16 & 0.0302 & \bfseries {\cellcolor[HTML]{D4EDDA}} 0.0335 & - & 0.0555 & 0.0864 & 0.0976 & 0.1525 & +39.63\% \\
scale\_16\_16 & 10000 & 16 & 16 & 0.2110 & \bfseries {\cellcolor[HTML]{D4EDDA}} 0.2452 & - & 0.2880 & 0.3624 & 0.3778 & 0.5679 & +14.87\% \\
scale\_16\_64 & 10000 & 64 & 16 & 0.4824 & 0.5884 & \bfseries {\cellcolor[HTML]{D4EDDA}} 0.5757 & 0.5823 & 0.6371 & 0.6949 & 2.5353 & +1.14\% \\
scale\_16\_128 & 10001 & 128 & 16 & 0.5410 & 0.6131 & 0.6023 & \bfseries {\cellcolor[HTML]{D4EDDA}} 0.5992 & 0.6294 & 0.7612 & 8.6815 & -0.50\% \\
scale\_64\_4 & 10002 & 4 & 64 & 0.0029 & \bfseries {\cellcolor[HTML]{D4EDDA}} 0.0051 & - & 0.0214 & 0.0306 & 0.0381 & 0.0847 & +76.08\% \\
scale\_64\_16 & 10001 & 16 & 64 & 0.1126 & \bfseries {\cellcolor[HTML]{D4EDDA}} 0.1607 & - & 0.1902 & 0.2393 & 0.2771 & 0.5238 & +15.54\% \\
scale\_64\_64 & 10001 & 64 & 64 & 0.4487 & 0.5518 & 0.5567 & \bfseries {\cellcolor[HTML]{D4EDDA}} 0.5453 & 0.5827 & 0.6913 & 2.5620 & -1.20\% \\
scale\_64\_128 & 10016 & 128 & 64 & 0.5206 & 0.6874 & \bfseries {\cellcolor[HTML]{D4EDDA}} 0.6232 & 0.6299 & 0.6421 & 1.0248 & 8.6433 & +1.07\% \\
sklearn\_4\_4 & 20000 & 4 & 4 & 0.0777 & \bfseries {\cellcolor[HTML]{D4EDDA}} 0.0778 & - & 0.1305 & 0.1777 & 0.2355 & 0.1240 & +37.22\% \\
sklearn\_16\_4 & 20000 & 4 & 16 & 0.0459 & \bfseries {\cellcolor[HTML]{D4EDDA}} 0.0445 & - & 0.0979 & 0.0990 & 0.1409 & 0.2601 & +54.59\% \\
sklearn\_64\_4 & 20000 & 4 & 64 & 0.0343 & \bfseries {\cellcolor[HTML]{D4EDDA}} 0.0379 & - & 0.1289 & 0.0818 & 0.1191 & 0.3227 & +53.68\% \\
sklearn\_4\_16 & 20000 & 16 & 4 & 0.0950 & \bfseries {\cellcolor[HTML]{D4EDDA}} 0.0955 & - & 0.1395 & 0.7110 & 0.5049 & 0.6074 & +31.50\% \\
sklearn\_16\_16 & 20000 & 16 & 16 & 0.0606 & \bfseries {\cellcolor[HTML]{D4EDDA}} 0.0668 & - & 0.1596 & 0.7160 & 0.5809 & 1.4670 & +58.16\% \\
sklearn\_64\_16 & 20000 & 16 & 64 & 0.0642 & \bfseries {\cellcolor[HTML]{D4EDDA}} 0.1183 & - & 0.3872 & 0.7631 & 0.8720 & 1.7704 & +69.43\% \\
sklearn\_4\_64 & 20000 & 64 & 4 & 0.0954 & 0.0993 & \bfseries {\cellcolor[HTML]{D4EDDA}} 0.0974 & 0.1090 & 1.4623 & 0.8614 & 2.7731 & +10.60\% \\
sklearn\_16\_64 & 20000 & 64 & 16 & 0.0847 & 0.1192 & \bfseries {\cellcolor[HTML]{D4EDDA}} 0.1189 & 0.2258 & 1.7475 & 1.3716 & 3.1765 & +47.34\% \\
sklearn\_64\_64 & 20000 & 64 & 64 & 0.0823 & \bfseries {\cellcolor[HTML]{D4EDDA}} 0.3070 & 0.4678 & 0.7878 & 2.2002 & 2.2456 & 3.1698 & +61.03\% \\
sklearn\_4\_128 & 20000 & 128 & 4 & 0.1054 & 0.1333 & \bfseries {\cellcolor[HTML]{D4EDDA}} 0.1097 & 0.1186 & 1.7604 & 1.1632 & 5.6663 & +7.49\% \\
sklearn\_16\_128 & 20000 & 128 & 16 & 0.0919 & 0.1942 & \bfseries {\cellcolor[HTML]{D4EDDA}} 0.1562 & 0.2838 & 2.3038 & 1.9742 & 4.9881 & +44.97\% \\
sklearn\_64\_128 & 20000 & 128 & 64 & 0.0933 & \bfseries {\cellcolor[HTML]{D4EDDA}} 0.5740 & 0.7190 & 0.9675 & 2.7204 & 2.9608 & 4.9597 & +40.67\% \\
\bottomrule
\end{tabular}}
\end{table*}
}

\paragraph{Implementation Details}
Our source code is available on GitHub.\footnote{\url{https://github.com/t3humphries/PE-means}} We normalize all datasets to have a max $\ell_2$ norm of $1$, for a clean presentation of metrics over different datasets.
We follow \google~and centre each dataset first by subtracting the mean~\cite{google_clustering}. 
Then, also following \google, we non-privately compute the $\ell_2$ and $\ell_\infty$ sensitivity bounds on each dataset. 
In practice, this bound should be computed privately or derived from public information about the attributes.
However, for the sake of comparison, we give all approaches an equal advantage by providing a tight bound on the domain.
Each experiment is repeated $50$ times over different random seeds, and we report the average. 
All shaded areas represent the 95\% confidence interval of the mean of the results.
We fix the privacy parameter $\delta=1/\DatasetSize^{1.1}$, following the best practice that the failure probability should be less than $1/\DatasetSize$~\cite{dwork14}.
The primary loss we consider is the normalized $k$-means loss.
\begin{equation}
    \text{Loss} = \frac{1}{\DatasetSize} \sum_{i=1}^{\DatasetSize} \min_{j=1}^{k} \|x_i - \mu_j\|^2
\end{equation}
We also compute each method's performance over different privacy budgets using the area under the curve (AUC) of the loss values against $\epsilon \in \{0.25, 0.5, 1.0, 2.0, 4.0\}$, computed via the trapezoidal rule:
\begin{equation}
    \text{AUC} = \sum_{i=1}^{n-1} \frac{\text{Loss}_i + \text{Loss}_{i+1}}{2} \cdot (\epsilon_{i+1} - \epsilon_i)
\end{equation}

\subsection{Vote Splitting Example}\label{sec:vote_split}
In Section~\ref{sec:selection}, we described the issue of vote splitting in PE's selection. In this section, we illustrate the phenomenon with a toy example. We generate a two-dimensional random dataset with $k=4$ using the \texttt{make\_blobs} function and set $\epsilon=\infty$. In Figure~\ref{fig:vote_split}, we plot four iterations of PE (one per column) and show the difference between the previous top-$k$ voting (top row of plots) and our weighted $k$-means selection (bottom row of plots). The plots include the private data in grey, the population of PE in blue, and the selected points for each method in red. 

In the case of the top-$k$ voting (first row), we observe that the initial iteration chooses points close to the four true clusters in the dataset. However, in the second iteration, there are many good candidates in the lower cluster, splitting the vote and causing all selected points to be in the other three clusters. Then, in the third iteration, the lower cluster dies out and the vote becomes split in the upper cluster. The clusters do not recover in the fourth iteration, as it will now take PE numerous iterations to move back towards those areas. In contrast, the bottom row of plots shows PE maintaining and refining four healthy clusters throughout the iterations. We note that in this toy example, PE has essentially converged in the first iteration, which further highlights the challenge with top-$k$ voting. 

\begin{figure*}[h]
    \centering
    \includegraphics[width=\linewidth]{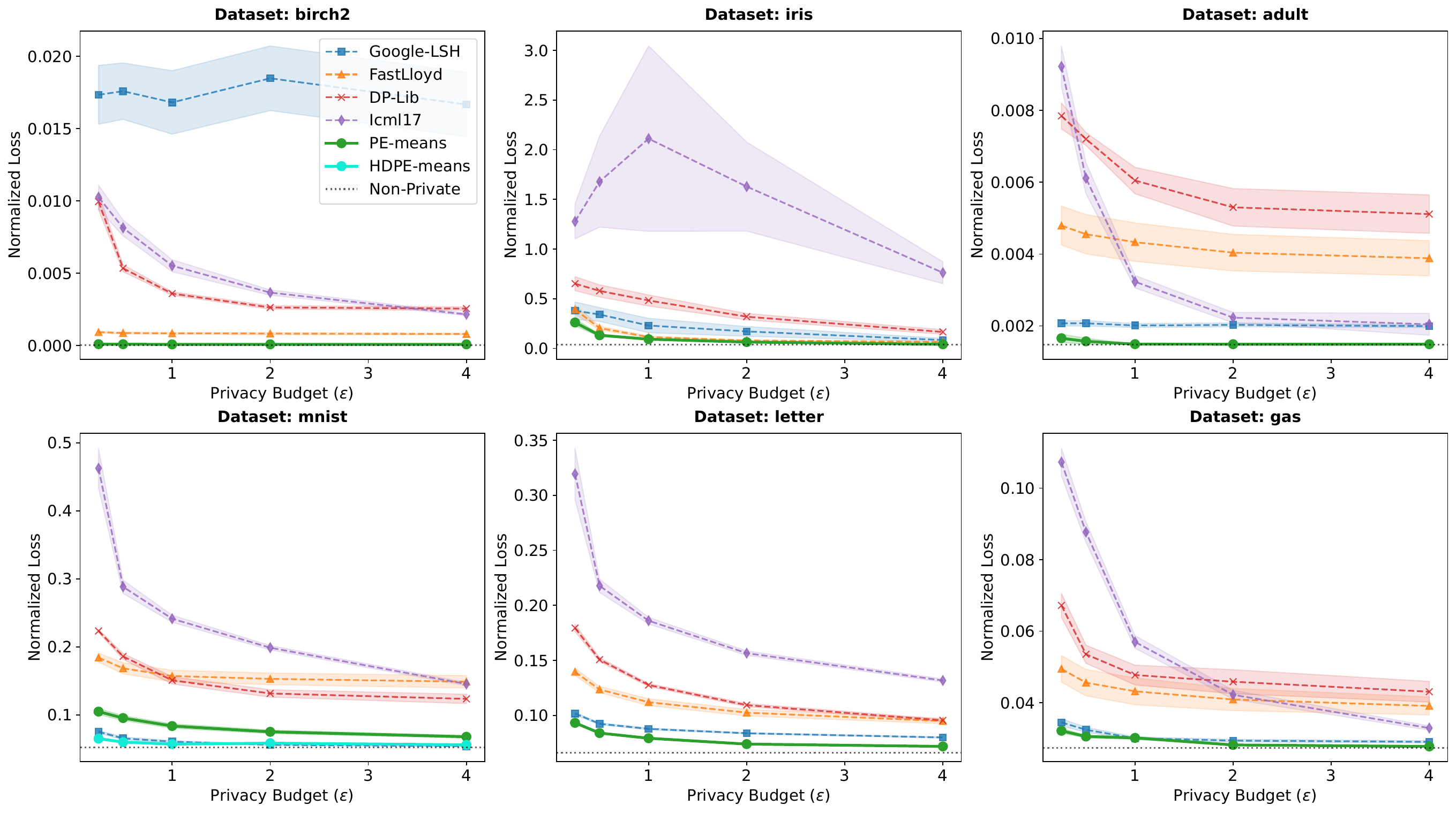}
    \caption{A plot of the privacy-utility trade-off of all approaches on real datasets, showing a PE-based approach always performs best.}
    \label{fig:real_sets}
\end{figure*}

\begin{figure*}[h]
    \centering
    \includegraphics[width=\linewidth]{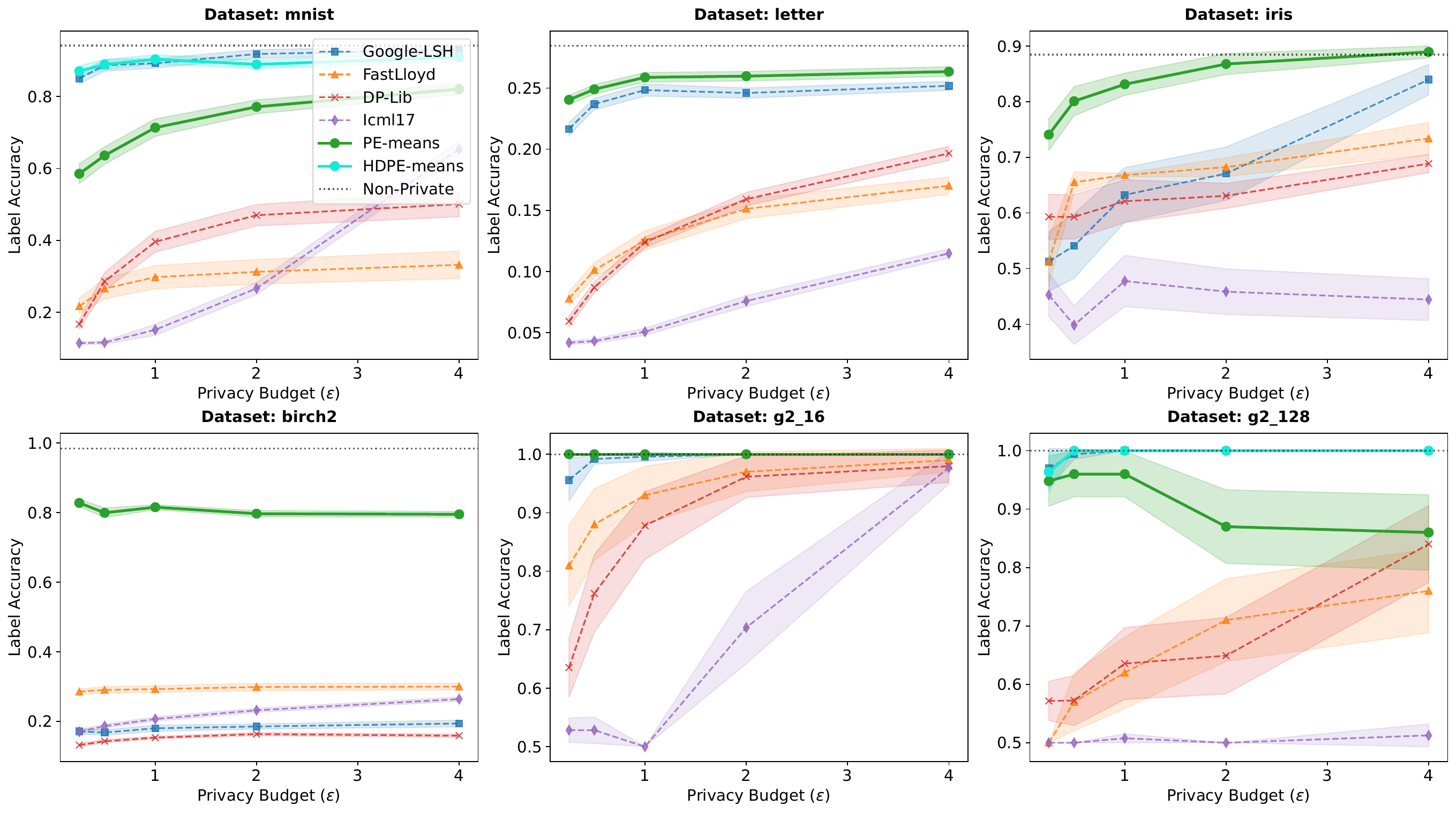}
    \caption{A plot of the label accuracy metric vs. privacy budget $\epsilon$ over all approaches on a subset of real datasets that have ground-truth labels. We observe that a PE-based approach always performs better or similar to state-of-the-art.}
    \label{fig:label_acc}
\end{figure*}

\subsection{Hyperparameter Tuning}\label{sec:hypers}

\subsubsection{Iterations and Variation Size}
Two of the most influential hyperparameters for \ourPE~are the number of iterations $\numiterations$ and the number of variations $\numVariations$. For a fixed privacy budget, if the number of iterations is too high, the amount of noise used in each iteration grows too large, preventing meaningful selections. If the algorithm executes too few iterations, it will not converge. Similarly, if $\numVariations$ is too large, the histogram can become sparse and the votes will not dominate even the smallest amount of noise. If $\numVariations$ is too small, then the chance of \ourPE~finding points closer to the private data is reduced, slowing convergence. We recall that our adaptive reduction of $\numVariations$ (Line~\ref{line:reduce_L} of Algorithm~\ref{alg:mod_PE}) protects against the case when $\numVariations$ is causing the noise to dominate the signal, but it is still beneficial to set a good starting point.

The only public parameters we can use to estimate the difficulty of a dataset in the private setting are its dimensions. In our initial testing, we observed that the number of iterations is influenced by the dimension of the dataset. Namely, in higher-dimensional space, PE needs more steps due to the curse of dimensionality. We also observed that the sparsity of the histogram for a given $\numVariations$ is highly influenced by the number of values in a dataset that can vote. To investigate this more rigorously, we conduct an experiment where we vary $\numVariations$ and $\numiterations$ for each dataset and plot the values against the dataset size and dimension.
This experiment uses all datasets to give a variety of datasets spanning dimensions, dataset sizes, and values of $k$. We plot the results in Figure~\ref{fig:hyper_tuning}, where $\epsilon=1$. We scatter the values of the top-$3$ best-performing parameters for each dataset and draw a line of best fit using the LOWESS method. While the top-left plot reveals no useful relationship between the number of variations $\numVariations$ and the number of dimensions $\Ddim$, the top-right plot shows a clear relationship between the dataset size $\DatasetSize$ and $\numVariations$. We approximate this relationship with the heuristic $\numVariations=max(\DatasetSize/5,4)$. 
Similarly, we see a clear relationship between the dimension $\Ddim$ and number of iterations $\numiterations$ in the bottom-left plot, which we approximate with the heuristic $\numiterations=max(4\sqrt{\Ddim},1)$. Finally, we see no clear relationship between $\DatasetSize$ and $\numiterations$ in the bottom-right plot. In our experiments, we never encounter the constant cases in the max terms, but include them to ensure a robust implementation.
\begin{figure*}[h]
    \centering
    \vspace{4pt}
    \includegraphics[width=\linewidth]{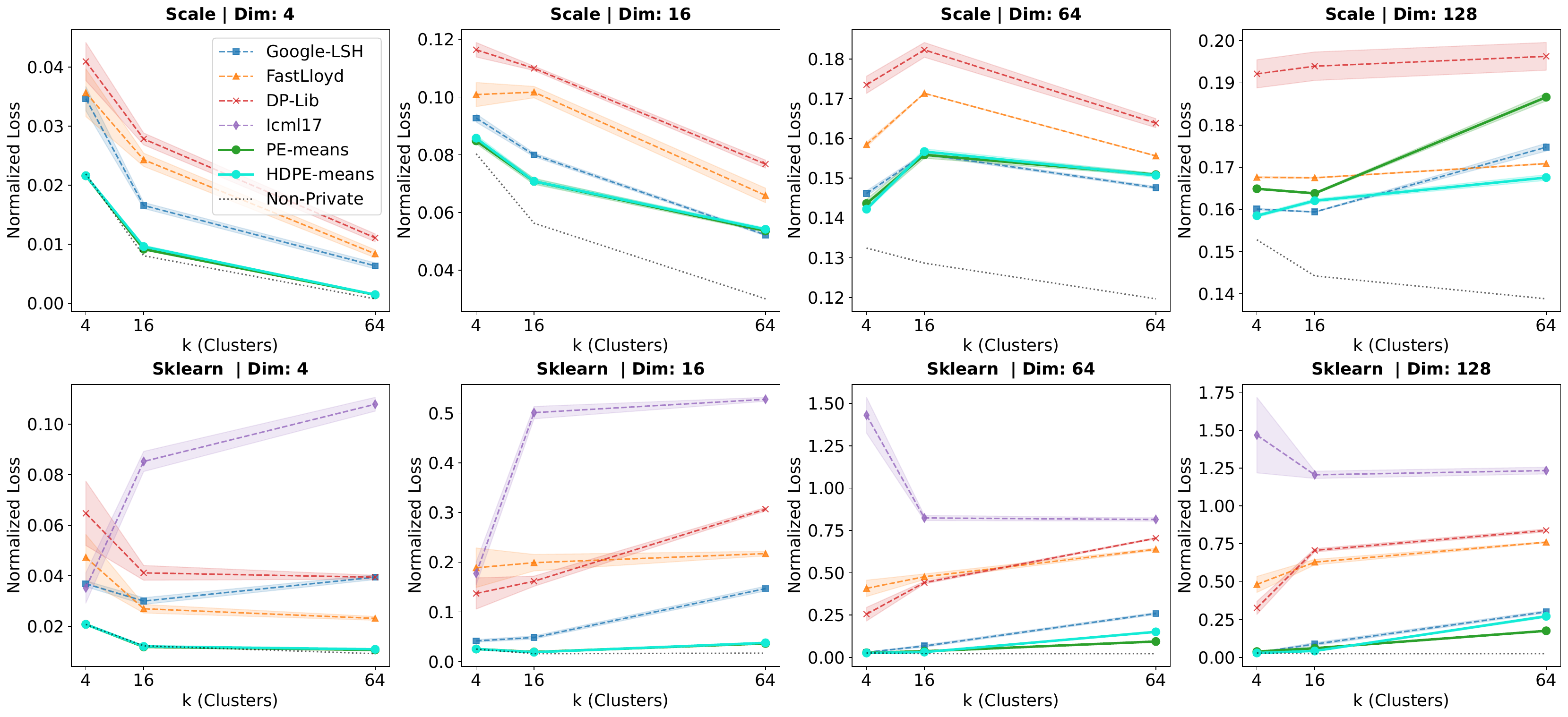}
    \caption{A comparison illustrating how approaches scale with the number of clusters $k$ over different synthetic datasets at $\epsilon$=1.0.}
    \label{fig:synth_sets}
\end{figure*}
While Figure~\ref{fig:hyper_tuning} demonstrates that our heuristics closely follow the line of best fit, it is also clear that the optimal settings for a number of datasets are quite far away from the line.
To further investigate this, we consider the difference in utility between the optimal settings and the heuristic settings in Figure~\ref{fig:LT_gap}.
The x-axis represents the loss of the optimal configuration of $\numiterations$ and $\numVariations$ over the entire grid search, and the y-axis represents the loss when $\numiterations$ and $\numVariations$ are set following the heuristic.
We can see most points lie on or very close to the line $y=x$, indicating near-optimality. 
Thus, we argue our heuristics offer good default values in practice.

To extend these heuristics to also account for the privacy budget $\epsilon$, we experimented with scaling them by factors such as $\epsilon, 0.5\epsilon, 2\epsilon, \sqrt{\epsilon}$ (since both parameters should decrease with smaller epsilon). We find that for values of $\epsilon<1$, no modification improves over the baseline, as our adaptive strategy increases the signal-to-noise ratio for any dataset that needs it (and not all do). However, when $\epsilon>1$, we find it is best (taking the median performance over all datasets) to scale the number of iterations by $\epsilon$ ($\numiterations=max(4\epsilon\sqrt{\Ddim},1)$).

\subsubsection{Variation Parameters}\label{sec:var_params}
The remaining hyperparameters are those of our variation API. The first is the Lévy index $\beta$ that determines how heavy the tails of the distribution are. Smaller values indicate a heavy tail with $\beta=1$ representing a Cauchy distribution, and $\beta=2$ representing a Gaussian. In our implementation, we add a condition to explicitly draw from a standard Gaussian at $\beta=2$.
The second parameter, the variation scale $\mutationScale$, is a multiplicative factor to control how much to vary the centroids. We conduct a grid search over possible values of each parameter (with $\numVariations$ and $\numiterations$ fixed to the heuristics from the previous section) and plot the results in Figure~\ref{fig:mut_heatmap}. 
We plot the grid search results as a heatmap with each cell showing the percentage increase of this parameter configuration over the optimal parameter configuration, averaged over all datasets we evaluate (see Table~\ref{tab:auc_results} for the complete list of datasets). We can see that $\beta=1.75$ and $\mutationScale=0.01$ give the best results and thus, we choose these values as our default. In Figure~\ref{fig:mut_scatter}, we scatter the optimal vs. default values of $\beta$ and $\mutationScale$ over all datasets. We can see that the default parameters stay close to the optimal line of $y=x$, indicating a good choice of parameters.

\subsection{Utility Benchmark}\label{sec:exp_results}
\paragraph{Overall Comparison}
We compare \ourPE~and HD\ourPE~to the baselines over all datasets and summarize the results in Table~\ref{tab:auc_results}. The metric we use is the AUC of the $k$-means loss (a summary of performance over a collection of epsilons, with smaller values being best).\footnote{To convert the non-private $k$-means loss to an AUC score, we compute the area under the horizontal line drawn at the non-private utility over the same set of epsilons evaluated for the private approaches.} We highlight the approach with the lowest loss AUC in green. HD\ourPE~is only included when the dimension of the dataset is greater than $16$, as that is the dimensionality we project down to. 
Overall, \ourPE~and HD\ourPE~perform the best over most datasets, with \fastlloyd~doing well on the g2\_4 dataset, and \google~doing well in two of the high-dimensional scale datasets. We give the percentage of improvement (or decline) of the best PE-based approach compared to the best non-PE private approach in the last column. We see that when PE-based approaches are outperformed, it is by at most $2\%$, but on average, they improve by $26\%$, with improvements of as much as $91\%$. Finally, we note that HD\ourPE~typically outperforms \ourPE~on datasets with dimension larger than $16$. In high-dimensional cases where \ourPE~outperforms HD\ourPE, it is typically not by a significant amount compared to the performance of related work, and thus we recommend always using HD\ourPE~in $\Ddim>16$ datasets.

\paragraph{Real Dataset Privacy vs. Utility Trade-off}
We give the full privacy vs. utility trade-off for the various real datasets (including the three evaluated by \google~\cite{google_clustering}) in Figure~\ref{fig:real_sets}. We observe that a PE-based approach is always the best approach, being the closest to the non-private baseline in all plots. Among the baselines, we observe that \fastlloyd~performs well in the lower dimensions (birch2 and iris) and \google~does well in higher dimensions. 
We also note that our PE-based approaches scale well to limited privacy budgets.

\paragraph{Ground Truth Evaluation}
Following \google~\cite{google_clustering}, we also evaluate an alternate metric called cluster label accuracy. We only evaluate this metric for datasets for which we have a ground-truth labelling, namely: mnist, letter, iris, birch2, and the G2 datasets. Given a clustering of the private data, we compute the metric by first assigning each centroid a label through a majority vote of the ground-truth label of the points in that cluster. Then we simply compute accuracy based on the number of private points correctly labelled by the centroid label. The results are given in Figure~\ref{fig:label_acc}. We observe that for the letter, iris, and birch2 datasets, \ourPE~outperforms related work by a statistically significant amount as the confidence intervals do not overlap. In the remaining datasets, \ourPE~or HD\ourPE~perform similarly to the best baselines as well as the non-private baseline, indicating near-optimal performance.

\paragraph{Scaling Evaluation} In Figure~\ref{fig:synth_sets}, we show how the algorithms scale with the number of clusters and a fixed privacy budget of $\epsilon=1$. We omit \balcan~from the scale dataset plots for clarity as the loss is an order of magnitude worse than other approaches. The scale datasets follow the trend of the non-private version until we increase dimensions, at which point the noise has more of an effect at higher numbers of clusters. This is likely because the data size remains fixed, causing the number of samples voting (or being aggregated) per cluster to decline. The scale datasets also highlight the room for improvement in PE-based approaches in higher dimensions. For the Sklearn sets, which tend to be more well-separated Gaussian blobs, we observe PE-based approaches outperforming all other approaches and also scaling better with $k$.

\subsection{Discussion}\label{sec:discussion}
In addition to showing the improvements of PE-based approaches in clustering, our experimental evaluations also provide support for the PE algorithm in general.
Specifically, \ourPE~does not use foundation models to implement the random and variation APIs.
This means we can rule out the effect of dataset contamination (evaluation datasets appearing in the foundation model's training set) in our results.
This gives additional evidence to support the effectiveness of the PE paradigm independently of the improvement of foundation models.

\section{Related Work}\label{sec:related_work}
\paragraph*{DP $k$-means} In addition to the baselines we compare against in Section~\ref{sec:experiments}, there are several previous works that also solve the DP $k$-means clustering problem, which are either improved upon by our baselines or are more theoretical in nature.
One line of work focused on developing a private version of Lloyd's algorithm~\cite{blum2005, mcsherry2009, Dwork2011, su_clustering, diaa2025fastlloyd} starting with Blum et al.~\cite{blum2005} and concluding with our baselines of Su et al.~\cite{su_clustering} and FastLloyd~\cite{diaa2025fastlloyd}.
Another line of work used the sample and aggregate framework~\cite{Nissim2007,Mohan2012} to solve $k$-means, but was outperformed by Su et al.~\cite{su_clustering}. 
A theoretical line of work focused on minimizing the bounds on approximation error, but did not provide experimental evaluation~\cite{Feldman2009,Feldman2017,Nissim2018,stemmer2018,Ghazi2020,Jones_2021}.

\paragraph*{DP Evolutionary Algorithms} There has been previous work that has applied evolutionary techniques to the problem of clustering.
The first differentially private evolutionary algorithm was developed by Zhang et al.~\cite{zhang_2013}, who evaluated it on the $k$-means clustering problem.
Follow-up work by Humphries and Kerschbaum~\cite{humphries23dpga} showed that Zhang et al.'s solution suffered from prohibitively poor utility and provided an improved algorithm which was evaluated on the $k$-medians clustering problem.
While our work also applies an evolutionary-based approach to solve clustering, our algorithm uses significantly less privacy budget.
The population of \ourPE~is a set of centroids from which a single solution of $k$ points is chosen in each iteration using a DP histogram with sensitivity $1$, following the previous work in PE~\cite{pe, augpe, pe_three,tran2026differentially}.
Humphries and Kerschbaum instead evolve a population where each candidate is a solution of $k$ points and apply the exponential mechanism on the clustering loss (with sensitivity proportional to the data domain) to select multiple candidates in each iteration, using significantly more privacy budget.

\section{Conclusion}\label{sec:conclusion}
We have shown that the PE algorithm is well-suited to the problem of $k$-means clustering.
Our benchmark shows that \ourPE~and HD\ourPE~offer state-of-the-art performance on many synthetic and real clustering datasets.
Furthermore, the design of \ourPE~is of independent interest, offering solutions to vote-splitting and signal-to-noise ratio management that can be extended to other modalities of PE. In future work, we will study variation APIs that more efficiently search high-dimensional Euclidean spaces.

\begin{acks}
We would like to thank Sivakanth Gopi for helpful discussions on an earlier version of this work, including the suggestion to use MLE post-processing on the vote histogram.
\end{acks}

\bibliographystyle{ACM-Reference-Format}
\bibliography{refs}

\end{document}